\DeclareMathOperator*{\argmax}{arg\,max}
\DeclareMathOperator*{\argmin}{arg\,min}
\DeclareMathOperator{\GC}{GC}
\DeclareMathOperator{\Var}{Var}
\DeclareMathOperator{\erf}{erf}
\DeclareMathOperator{\PI}{PI}
\DeclareMathOperator{\dist}{dist}
\theoremstyle{plain}
\newtheorem{theorem}{Theorem}[section]
\newtheorem{proposition}[theorem]{Proposition}
\newtheorem{lemma}[theorem]{Lemma}
\theoremstyle{definition}
\newtheorem{definition}[theorem]{Definition}
\theoremstyle{remark}
\icmltitlerunning{A Margin-based Multiclass Generalization Bound via Geometric Complexity}
\begin{document}

\twocolumn[
\icmltitle{A Margin-based Multiclass Generalization
Bound via Geometric Complexity}

\icmlsetsymbol{equal}{*}

\begin{icmlauthorlist}
\icmlauthor{Michael Munn}{equal,goog}
\icmlauthor{Benoit Dherin}{equal,goog}
\icmlauthor{Javier Gonzalvo}{goog}
\end{icmlauthorlist}

\icmlaffiliation{goog}{Google Researh, USA}

\icmlcorrespondingauthor{Michael Munn}{munn@google.com}

\icmlkeywords{learning theory, generalization bound, model complexity, deep learning theory, geometric complexity}

\vskip 0.3in
]

\printAffiliationsAndNotice{\icmlEqualContribution} %

\begin{abstract}
There has been considerable effort to better understand the generalization capabilities of deep neural networks both as a means to unlock a theoretical understanding of their success as well as providing directions for further improvements. In this paper we investigate margin-based multiclass generalization bounds for neural networks which rely on a recent complexity measure, the geometric complexity,  developed for neural networks and which measures the variability of the model function \cite{dherin2022neural}.

We derive a new upper bound on the generalization error which scales with the margin-normalized geometric complexity of the network and which holds for a broad family of data distributions and model classes. Our generalization bound is empirically investigated for a ResNet-18 model trained with SGD on the CIFAR-10 and CIFAR-100 datasets with both original and random labels. 
\end{abstract}

\section{Introduction}
Within the field of machine learning, a model's ability to generalize well to unseen data is one of the key metrics of performance. Modern deep learning techniques have proven successful at this goal across multiple domains and applications. However, it is still not well understood how or why these neural network models exhibit such good generalization capabilities. 

Classical statistical learning theory provides a theoretical framework to understand generalization and over the years various complexity measures have been proposed which aim to capture the relationship between generalization and complexity. In this context, the expectation is that lower model complexity should imply tighter generalization gaps. However, many of these well-known complexity measures, such as parameter count or parameter norms are not well-suited to the experimental results that are often observed when training neural networks \cite{jiang2019fantastic, zhang2017understanding}. In particular, the phenomena of double descent \cite{nakkiran2021deep, Belkin15849, belkin2021fear} clearly illustrates the drawbacks of relying on simple parameter count or model depth to measure model complexity and shows that as model size increases the model is able to perfectly fit the training data while also obtaining low test error. Other more theoretically motivated complexity measures may provide generalization error bounds which are vacuous or computationally intractable to compute. We refer the reader to \cite{jiang2019fantastic} which provides a well-written discussion of the landscape of current and commonly used complexity measures as well as an extensive empirical analysis of over 40 different generalization measures. 

In this paper we focus on a complexity measure recently proposed in \cite{dherin2021geometric, dherin2022neural}. In those papers, the authors show through theoretical and empirical techniques that the geometric complexity is well-suited for the analysis of deep neural networks. Namely, they show that a large number of training and tuning heuristics in deep learning have the advantageous side-effect of decreasing the geometric complexity of the learned solution. These results hint that the geometric complexity can serve as a useful proxy for measuring network performance. 
Supporting this idea, thorough experimentation in \cite{novak2018sensitivity} has shown empirically that a complexity measure similar to the geometric complexity correlates strongly with model generalization.

\subsection{Contributions}
Given these positive indicators, one natural question to explore is whether the geometric complexity can serve as an effective means for bounding the generalization error in neural networks. In this paper we show this to be true, and derive new upper bounds on the margin-based multi-class generalization error which scale with the margin-normalized
geometric complexity of the network (Theorem \ref{theorem:binary_generalization_bound}). 

This relationship can best be illustrated in Figure \ref{figure:cifar10} which plots the excess risk (test accuracy minus the training accuracy) across multiple epochs of training a ResNet-18 model \cite{he2016deep} on the CIFAR-10 dataset \cite{krizhevsky2009learning}; c.f. \cite{bartlett2017spectrally} which examines a similar behavior for the Lipschitz constant of the network (i.e., the product of the spectral norms of the weight matrices) of AlexNet also trained on CIFAR-10. Similar to the behavior of the Lipschitz constant in that setting, note that the geometric complexity (see Definition \ref{definition:geometric_complexity} for the precise formulation) is correlated with the excess risk, both when training on original labels and random labels; see also Figure \ref{figure:cifar10_appendix} in Appendix  \ref{section:appendix_cifar_experiments} which includes plots demonstrating similar behavior when training ResNet-18 on the CIFAR-100 dataset.

Our contributions can be summarized as follows:
\begin{itemize}
    \item Theorem \ref{theorem:multiclass_generalization_bound} below states the main generalization bound that is the basis of this work. It is important to note that this bound has no dependence on artifacts of the network architecture such as number or depth of layers and is multi-class. The full details of the proof can be found in the Appendix section \ref{section:appendix_multiclass_proof}.
    \item Our proof relies on a novel covering number argument which ultimately follows from a consequence of an assumption on the underlying data distribution on which the model is trained. Namely, we require that the probability distribution from which data is sampled satisfies a Poincar\'e inequality (see Definition \ref{definition:prob_measure_satisfies_PI}). This framework further highlights the data dependence of the complexity measure as well as geometric properties of the model function. 
    \item The theorems we present here provide bounds on the generalization error which depend on the theoretical geometric complexity of the model function (see Definition \ref{definition:theoretical_geometric_complexity}). In Section \ref{section:theoretical_vs_empirical} we show how the empirical geometric complexity (which is computed over a sample) compares to the theoretical geometric complexity (which is measured over the entire data distribution). Namely we show that for Lipschitz functions these two quantities are comparable; see Proposition \ref{proposition:relate_theoretical_empirical_gc}.
\end{itemize}

The theorem below states that for a large class of data distributions (i.e., those that satisfy the Poincar\'e inequality which include the uniform distribution, Gaussian and mixtures of Gaussian distributions) the generalization error is bounded by the geometric complexity of the network:
\newpage
\begin{theorem}\label{theorem:multiclass_generalization_bound}
Given $a_1, a_2$ be positive reals. Let $S = \{(x_1,  y_1), \dots, (x_m, y_m)\}$ be i.i.d.~input-output pairs in $\mathbb{R}^d \times \{1,\cdots, k\}$ and suppose the distribution $\mu$ of the $x_i$ satisfies the Poincar\'e inequality with constant $\rho >0$.  Then, for any $\delta > 0$, with probability at least $1 - \delta$, every margin $\gamma > 0$ and network $f: \mathbb{R}^d \to \mathbb{R}^k$ which satisfies $\GC(f, \mu) \leq a_1$ and $\|\mathbb{E}_{\mu}(f)\| \leq a_2$ satisfy
$$
\mathbb{P}\left[\argmin_jf(x)_j \neq y \right] \leq \widehat{\mathcal{R}}_{S, \gamma}(f) + 
\dfrac{36\tilde{C}\sqrt{k\pi}}{\gamma m} + 
 3\sqrt{\dfrac{\log \frac{2}{\delta}}{2m}}
$$
where $\widehat{\mathcal{R}}_{S, \gamma}(f) =m^{-1} \sum_i \mathbbm{1}_{y_if(x_i) \leq \gamma}$ and $\tilde{C} = a_2 + \sqrt{a_1\rho/\delta}$.
\end{theorem}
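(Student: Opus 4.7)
The plan is to combine the classical margin-based Rademacher pipeline with a covering-number argument that exploits the Poincar\'e inequality to convert the geometric complexity $\GC(f,\mu)$ into a function-space diameter for the hypothesis class $\mathcal{F} = \{f : \GC(f,\mu)\leq a_1,\ \|\mathbb{E}_\mu(f)\|\leq a_2\}$. The proof will have three main stages: a Lipschitz surrogate reduction for the multiclass misclassification loss, a vector-contraction step to reduce to scalar coordinates, and a covering-number estimate with effective diameter $\tilde C$.

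First, I would introduce the $\gamma$-ramp surrogate $\Phi_\gamma$, which is $1/\gamma$-Lipschitz in its score argument and sandwiches the classification error between the training margin loss $\widehat{\mathcal{R}}_{S,\gamma}(f)$ and the population $0/1$ error. Standard symmetrization plus a bounded-differences (McDiarmid) argument then yields a bound of the schematic form
\[
\mathbb{P}\bigl[\argmin_j f(x)_j\neq y\bigr] \leq \widehat{\mathcal{R}}_{S,\gamma}(f) + 2\,\mathrm{Rad}_m(\Phi_\gamma\!\circ\mathcal{F}) + c\sqrt{\tfrac{\log(1/\delta)}{m}},
\]
where $\mathrm{Rad}_m$ is the usual (empirical) Rademacher complexity. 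Maurer's vector-contraction inequality then peels off $\Phi_\gamma$, introducing a $\sqrt{k}/\gamma$ factor and reducing the remaining task to bounding the Rademacher complexity of the scalar coordinate class $\{f_j : f\in\mathcal{F}\}$.

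Second, the Poincar\'e assumption is the key tool that turns the gradient-based quantity $\GC(f,\mu)$ into function-space control. Applied coordinate-wise it gives $\Var_\mu(f_j)\leq\rho\,\mathbb{E}_\mu\|\nabla f_j\|^2$, so summing over $j$ and using $\GC(f,\mu)\leq a_1$ yields $\|f-\mathbb{E}_\mu f\|_{L^2(\mu)}^2\leq \rho a_1$. Combined with $\|\mathbb{E}_\mu(f)\|\leq a_2$ and Chebyshev's inequality, this shows that on a $\mu$-set of probability at least $1-\delta$ one has $\|f(x)\|\leq \tilde C = a_2+\sqrt{a_1\rho/\delta}$. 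Modulo this exceptional set of $\mu$-measure $\delta$, $\mathcal{F}$ fits in an $L^2(\mu)$-ball of radius $\tilde C$, and a Dudley-entropy (or direct Massart) covering argument on this ball produces a Rademacher complexity of order $\tilde C/(\gamma m)$; multiplying by the $\sqrt{k}$ from the contraction step together with explicit absolute constants delivers the announced $36\tilde C\sqrt{k\pi}/(\gamma m)$ term. The two failure events, one from Chebyshev and one from McDiarmid, are finally combined so that a single $\delta$ appears in the statement and the $\sqrt{1/\delta}$ dependence is absorbed inside $\tilde C$.

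The main obstacle I anticipate is the covering step. Since $\GC(f,\mu)$ controls only $\|\nabla f\|_{L^2(\mu)}$, naive sup-norm metric entropy is unavailable; the Poincar\'e inequality is precisely what makes it possible to instead cover $\mathcal{F}$ in the $L^2(\mu)$ (equivalently empirical $\ell_2$) metric, but a careful Massart-style estimate is then required to realize the announced $O(1/(\gamma m))$ scaling rather than the weaker $O(1/(\gamma\sqrt m))$ that a generic Rademacher bound would give. A secondary delicate point is the bookkeeping between the $\mu$-side exceptional event from Chebyshev and the sample-side exceptional event from McDiarmid, which has to be arranged so that a single $\delta$ governs both and the $\sqrt{1/\delta}$ lives only inside the constant $\tilde C$, not in front of the whole bound.
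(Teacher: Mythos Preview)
Your proposal follows the same overall pipeline as the paper---margin surrogate, Rademacher bound, Poincar\'e plus Chebyshev to get $\|f(x)\|\leq\tilde C$ with high probability, then a covering/Dudley argument---but differs in how the $\sqrt{k}$ factor is extracted. You obtain it via Maurer's vector-contraction inequality and then cover scalar coordinates; the paper instead applies Talagrand's lemma (which gives only the $1/\gamma$) and picks up the $\sqrt{k}$ downstream from a volume argument in the output space $\mathbb{R}^k$, namely $\mathcal{N}(B(\tilde C),\epsilon,\|\cdot\|_2)\leq(3\tilde C/\epsilon)^k$, so that $\sqrt{\log\mathcal{N}}=\sqrt{k}\sqrt{\log(3\tilde C/\epsilon)}$ inside the Dudley integral. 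Both routes land on the same $\sqrt{k}$ dependence; your contraction approach keeps the covering one-dimensional and is arguably cleaner, while the paper's ball-counting in $\mathbb{R}^k$ makes the role of the Poincar\'e-derived diameter $\tilde C$ more geometrically explicit. Your worry about realizing $O(1/(\gamma m))$ rather than $O(1/(\gamma\sqrt m))$ is resolved in the paper exactly as you anticipate: the Dudley integral $\int_\alpha^{\tilde C}\sqrt{\log(3\tilde C/\epsilon)}\,d\epsilon$ is bounded independently of $m$ (by $\tfrac{3\tilde C\sqrt\pi}{2}$, via an explicit antiderivative in terms of $\erf$), and the $12/m$ prefactor from the Bartlett--Foster--Telgarsky entropy lemma then delivers the $1/m$ rate after sending $\alpha\to 0$. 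The paper is less careful than you are about separating the Chebyshev and McDiarmid failure events; your attention to that bookkeeping is warranted.
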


\begin{figure}[h]
\includegraphics[width=8.4cm]{./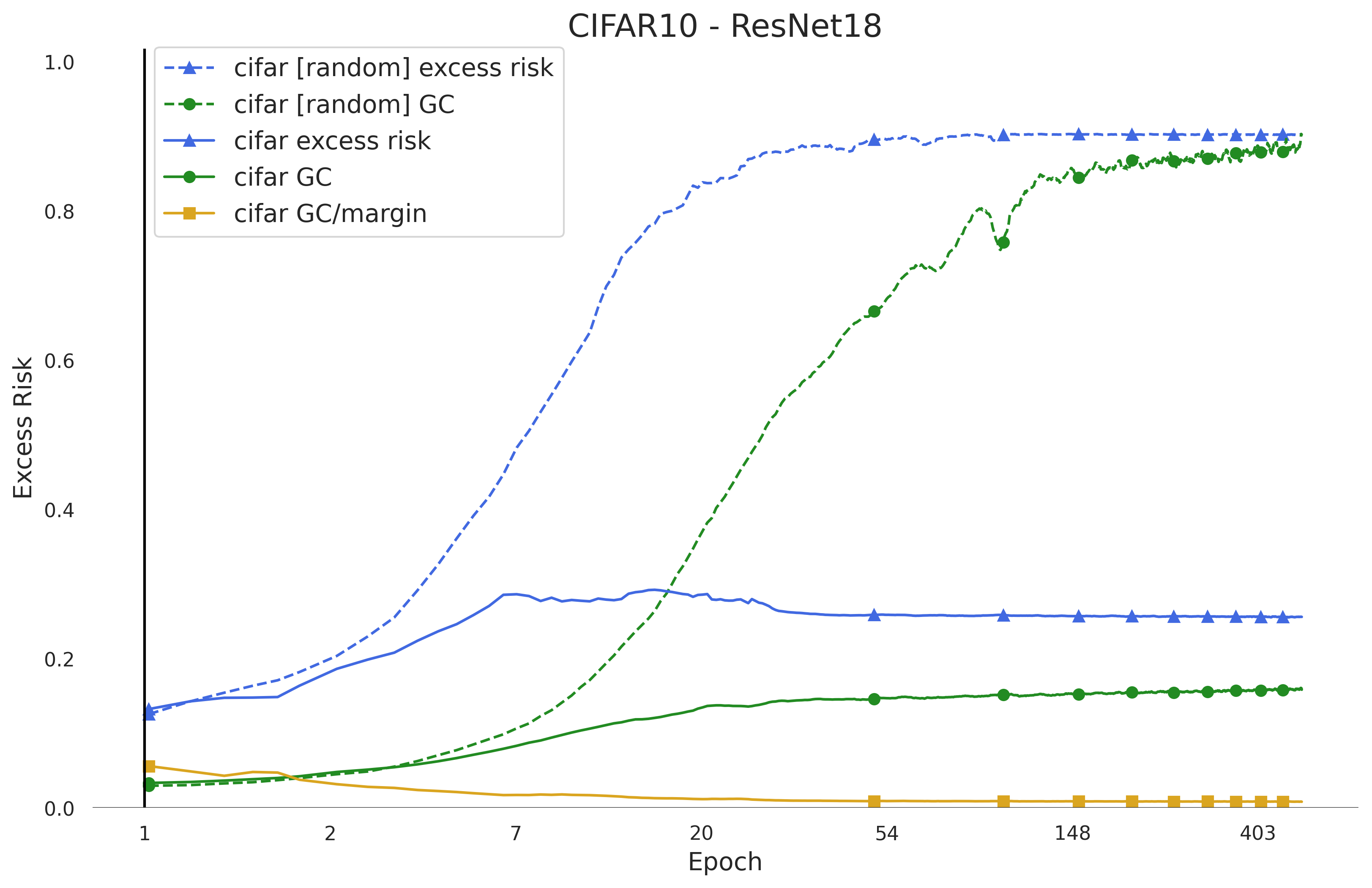}
\caption{Analysis of ResNet-18 \cite{he2016deep} trained with SGD on CIFAR-10 with both original and with random labels. The blue triangle-marked curves plot the excess risk across training epochs (on
a log scale) while the green circle-marked curves track the geometric complexity ($\GC$), normalized so that the two curves for random labels meet. Note that in both settings the $\GC$ is closely correlated with the excess risk. Furthermore, normalizing the $\GC$ by the margin (i.e., the
square-marked curve) neutralizes growth across epochs. Similar plots for CIFAR-100 can be found in Section \ref{section:appendix_cifar_experiments} of the Appendix.}
\label{figure:cifar10}
\end{figure}

\section{Related Work}
There has been a considerable amount of work in this direction over the years. Here we survey just a few of the most relevant results as they relate to the current work.

\vspace{-.15in}
\paragraph{Complexity measures for neural networks and double descent.} An interesting aspect of the geometric complexity, as discussed in detail in \cite{dherin2022neural}, is that it captures the double-descent phenomena \cite{Belkin15849,  belkin2021fear, nakkiran2021deep}. Namely, when training multiple ResNet-18 models on CIFAR-10 with increasing layer width, both the test loss and the $\GC$ follow a double descent curve as the model width increases; see Figure 5 in \cite{dherin2022neural}. Recent alternative measures of complexity for neural networks, which also appear to correlate with generalization, appear to capture this phenomenon as well \cite{grant2022predicting, gamba2022deep, achille2018emergence, novak2018sensitivity}. This suggests interesting connections between the double-descent phenomena and generalization bounds for neural networks.

\vspace{-.15in}
\paragraph{Implicit regularization and sharpness aware techniques.} The generalization power of neural network in spite of their high expressivity has suggested that a form of implicit regularization keeps the model from overfitting \cite{neyshabur2017implicit}. Recent work \cite{barrett2021implicit, smith2021on, ghosh2023implicit, chao2021sobolev} have shown that the optimization scheme implicitly regularizes the loss gradients favoring flat regions of the loss landscape. In turn these flatter regions with smaller gradients have been connected to smaller geometric complexity \cite{dherin2021geometric, dherin2022neural}. In \cite{zhang2023gradient} the increased generalization power of the sharpness-aware optimizer constructed in \cite{foret2021sharpnessaware} has been shown to rely of a similar mechanism of loss gradient regularization allowing the author to derive a generalization bound involving the loss gradients.

\vspace{-.15in}
\paragraph{Generalization bounds for neural networks.}
At a high level this work can be seen as a natural companion of the spectrally-normalized margin bounds obtained for neural networks in \cite{bartlett2017spectrally}. 
However, as their bound has clear dependence on the network architecture and norms of layer weights, ours is not directly comparable. While this lack of dependence on architecture can be viewed as an advantage of the $\GC$ as a complexity measure (primarily in its computation), our work should also be viewed in context of \cite{nagarajan2019uniform} which argues that such bounds alone cannot fully explain the generalization abilities of overparameterized neural networks. In particular, our bound does not reflect a width/depth dependence and, further, it remains to explore how our results perform in relation to the vacuity demonstrated by Nagarajan et al. We leave this detailed analysis for future work.

Finally, let us note that a number of other generalization bounds using different notions of complexity have been derived for neural networks; see for instance
\cite{zhang2023gradient,ghosh2023implicit,foret2021sharpnessaware,Sokolic2017RobustLM,Chatterji2019TheIR,Long2020Generalization} as well as closely related results that approach generalization error bounds using a PAC-Bayes framework which has been shown to yield tighter bounds \cite{langford2002pac, mcallester1999pac, dziugaite2017computing, neyshabur2017pac}. 

\section{Background and Notation}
To begin we introduce some notation and recall any relevant background material that is useful going forward. 

\subsection{Preliminaries} Let $(\Omega, \mathcal{F}, \mathbb{P})$ denote a probability space consisting of the sample space $\Omega$, the $\sigma$-algebra  $\mathcal{F}$ of subsets of $\Omega$ and the probability distribution $\mathbb{P}$ which maps sets of $\mathcal{F}$ to $[0,1]$. A random variable is a function $X$ from $\Omega$ to any set $\mathcal{S}$. In this paper, we are concerned with continuous random variables and thus will often take $\mathcal{S} = \mathbb{R}$ or $\mathbb{R}^d$ where $d>1$. In this case, we require the function $X$ to be measurable. 

The distribution of a random variable $X: \Omega \to \mathcal{S}$ is a probability measure $\mu$ on $\mathcal{S}$ defined so that $\mu(A) = \mathbb{P}(X \in A)$ for any measurable set $A \subset \mathcal{S}$. We say the probability distribution $\mu$ is continuous provided $\mu$ is absolutely continuous with respect to the Lebesgue measure $dx$ on $\mathbb{R}^d$; that is, there is a non-negative measurable function $u: \mathbb{R}^d \to [0,\infty]$ such that $d\mu = u(x) dx$.

For a real valued random variable $X$ with continuous probability distribution $\mu$, we denote the expected value of $X$ and the variance of $X$ with respect to the distribution $\mu$ as $\mathbb{E}_{\mu}[X]$ and $\Var_{\mu}(X)$, respectively. 

\paragraph{Notation} We use $\mathbbm{1}_A$ to denote the indicator function on the set $A$. We use the $\|\cdot\|_p$ to denote the $L^p$-norm on $\mathbb{R}^n$, $\|\cdot\|_F$ denotes the Frobenius norm. $\mathbb{R}_+$ denotes the positive reals.  We will use $S$ to denote a sample $\{(x_1,y_1), \dots, (x_m, y_m)\}$ of $m$ input-output pairs, at times just focusing on $S_X = \{x_1, \dots, x_m\}$ the projection over the inputs, denoted as an (input) dataset $D$. The sample $S$ is drawn i.i.d.~from some fixed but unknown data distribution $\mathscr{D}$ over the input space. In particular, we are concerned with probability data distributions which we denote by $\mu$ and their associated probability measure $\mathbb{P}$. For a set $A \subset \mathbb{R}^n$, $|A|$ denotes the Lebesgue measure of $A$ and $\mu(A)$ denotes the measure of $A$ for a measure $\mu$. Given a point $p \in \mathbb{R}^n$ and a radius $r > 0$, let $B_p(r)$ denote the ball of radius $r$ centered at $p$ and we drop the subscript $p$ to denote a ball centered at the origin; e.g. $B(r)$.

\subsection{The Poincar\'e Inequality}
Concentration inequalities are a cornerstone of convex geometry and have inspired many remarkable results and applications within numerous areas of theoretical computer science including randomized algorithms, Monte Carlo sampling methods, complexity approximation, and, of course, learning theory \cite{boucheron2004concentration, raginsky2013concentration, ledoux2001concentration}. Indeed, the concentration phenomena has been  successfully adopted or adapted across different settings and appears when studying Gaussian space, Riemannian manifolds, discrete product spaces, and algebraic structures. 

On a conceptual level, concentration inequalities quantify the amount of random fluctuations of independent random variables by bounding the probability that such a function differs from its expected value by more than a certain amount \cite{boucheron2013concentration, ledoux2001concentration}. The Poincar\'e inequality can be seen as a form of concentration, since it implies that if the function is ``spread out'' over the domain (i.e., it has a large variance), then its gradient must be correspondingly large. Conversely, if the gradients are small, then the function must be concentrated around its mean.

More precisely, the classic Poincar\'e inequality provides $L^p$ bounds on the oscillation of a function about its mean via $L^p$ bounds on its derivative. For our purposes, we focus on the case when $p=2$ though similar statement holds for all $1 \leq p \leq \infty$; see \cite{evans2022partial}. In this case, the classic Poincar\'e inequality states that, given a connected, bounded domain $U \subset \mathbb{R}^n$ with Lipschitz boundary, there exists a constant $C := C(U) > 0$ such that for every function $u: U \subset \mathbb{R}^n \to \mathbb{R}$ with $\int u^2 dx < \infty$ and $\int \|\nabla u\|^2 dx < \infty$,
\begin{equation}\label{equation:poincare_inequality}
\|u - \bar{u}\|_{L^2(U)} \leq C \|\nabla u\|_{L^2(U)},
\end{equation}
where $\bar{u} = \nicefrac{1}{|U|}\int_U u dx$ is the average value of $u$ over the domain $U$ and $|U|$ denotes the standard Lebesgue measure of $U \subset \mathbb{R}^n$. The optimal constant $C$ in \eqref{equation:poincare_inequality} is called the Poincar\'e constant and is related to the first eigenvalue of the negative Laplacian, computed via the Raleigh Quotient, and depends only on the geometry of $U$ \cite{chavel1984eigenvalues}.

The Poincar\'e inequality plays an important role in geometric and functional analysis and Poincar\'e type inequalities appear throughout the study of the geometry of metric measure spaces; i.e., metric spaces $(X,d)$ equipped with a Borel measure. In fact, it is an active area of research to better understand which properties of a metric measure space support a Poincar\'e inequality and its applications in these more abstract settings. There is a long list of metric measure spaces supporting a (local and/or non-local) Poincar\'e inequality including $\mathbb{R}^n$ \cite{evans2022partial}, Riemannian manifolds with non-negative Ricci curvature \cite{hebey2000nonlinear} and their Gromov-Hausdorff limits \cite{lott2007weak}, Carnot groups \cite{bruno2022local}, as well as other non-Riemannian metric measure spaces with fractional Hausdorff dimension \cite{heinonen2001lectures}. A more thorough overview of the literature would go out of the scope of the present paper, so we also refer the reader to \cite{heinonen2001lectures} and the survey \cite{hajlasz2000sobolev} and the references therein.

Returning to $\mathbb{R}^n$ with the standard Euclidean metric $d$, we can ask which measures $\mu$ on $(\mathbb{R}^n, d)$ satisfy a Poincar\'e type inequality as in \eqref{equation:poincare_inequality} beyond the standard Lesbesgue measure $dx$. For example, when $\mu$ is the standard Gaussian measure on $\mathbb{R}^n$, then for any smooth function $u: \mathbb{R}^n \to \mathbb{R}$, then it is known that (see, for example \cite{bakry2014analysis})
\begin{equation}\label{equation:gaussian_poincare_inequality}
\Var_{\mu} (u) \leq \mathbb{E}_{\mu}\left[|\nabla u|^2 \right]
\end{equation}
where $\Var_{\mu}(u) := \int |u - \int u d\mu|^2 d\mu$. This is known as the Gaussian Poincare Inequality and, in this setting, the Poincare constant $C=1$ is optimal. 

In fact, Poincar\'e type inequalities like \eqref{equation:poincare_inequality} and \eqref{equation:gaussian_poincare_inequality} are known to hold for wide collection of measures $\mu$ on $\mathbb{R}^n$ and take a simple form, particularly when $\mu$ is absolutely continuous with respect to the Lebesgue measure, typically referred to as  weighted Poincar\'e inequalities \cite{ferone2012remark}. In \cite{bakry2008simple} the authors prove a Poincar\'e inequality for a large class of probability measures including log-concave probability measures on $\mathbb{R}^n$. In addition, \cite{schlichting2019poincare} examines conditions on when a Poincar\'e inequality holds for mixtures of probability measures $\mu = t\mu_0 + (1-t)\mu_1$ for $t \in [0,1]$ on $\mathbb{R}^n$ and derives explicit bounds on the Poincar\'e constant for a number of useful examples including mixtures of two Gaussian measures with equal covariance matrix, mixtures of a Gaussian and sub-Gaussian measure, mixtures of two centered Gaussians with different variance, and mixtures of uniform and Gaussian measures. 

Whether or not a space supports a Poincar\'e inequality has deep connections to the geometry and analysis of the space and is closely related to the differentiability of Lipschitz functions in metric spaces \cite{cheeger1999differentiability}. Following \cite{schlichting2019poincare} and others, we define

\begin{definition}\label{definition:prob_measure_satisfies_PI}
For $n \geq 1$, a probability measure $\mu$ on $\mathbb{R}^n$ satisfies the Poincar\'e inequality with constant $\rho > 0$, if for all functions $u : \mathbb{R}^n \to \mathbb{R}$,
$$
\Var_{\mu}(u) := \int \Big|u - \int u d\mu\Big|^2 d\mu \leq \rho\int |\nabla u|^2 d\mu.
$$
In this case, we say $\mu$ satisfies $\PI(\rho)$.
\end{definition}

While the condition that a probability measure satisfies a Poincar\'e inequality is fairly ubiquitous and can be assumed for a large class of data distributions, it is possible to devise examples that are pathological in the context of machine learning and which violate this property. For example, consider a discrete distribution $\mu$ concentrated on a finite number of points $\{x_1, \dots, x_n\}$ for $x_i \in \mathbb{R}$; that is $\mu = \sum \delta_{x_i}$.  Then, a function $f$ which is constant around small neighborhoods of each $x_i$ yet for which $f(x_i) \neq f(x_j)$ for $i \neq j$, will have have zero derivative on  the support of $\mu$ but the variance of the function $f$ is non-zero.  

More generally, consider a distribution whose support consists of more than one connected components $U_1, \dots, U_n$. Again, take $f$ to be the sum of bump functions which are individually constant on a neighborhood of each $U_i$ but for which $f|_{U_i} \neq f|_{U_j}$ when $i \neq j$. Then, the gradient of $f$ is zero on each component, which implies $\mathbb{E}_{\mu}[\|\nabla f|^2] = 0$; however, the variance of $f$ is non-zero by construction. In particular, this points to a necessary condition for a distribution $\mu$ to satisfy $\PI(\rho)$. Namely, $\mu$ needs to have path connected support which is the case for most image distribution where one can morph one image smoothly into another and, more broadly, consistent with the data manifold hypothesis. 

\subsubsection{Poincar\'e and Isoperimetry}
Lastly, let us note the connection to the Poincar\'e inequality and the isoperimetric inequality, particularly recent work on the robustness of model functions via isoperimetry \cite{bubeck2023universal}. Classically, in the field of geometric analysis, the two concepts are intimately related; in fact, the isoperimetric inequality can be proven directly from the Poincar\'e inequality. In the context of machine learning and measure concentration in high-dimensional geometry, it is known that if a probability measure $\mu$ on $\mathbb{R}^n$ satisfies $\PI(\rho)$ then the tails of $\mu$ are exponentially small \cite{bobkov1997poincare}; i.e., $\mu(|x| > t) \leq C e^{-ct/\sqrt{\rho}}$, for some constants $c, C$. Essentially, the Poincar\'e inequality ensures random variables on $\mathbb{R}^n$ are $\sqrt{\rho}$-subgaussian.

As stated in \cite{bubeck2023universal}, a measure $\mu$ on $\mathbb{R}^n$ satisfies $c$-isoperimetry if for any bounded $L$-Lipschitz real-valued function satisfies
$$
\mathbb{P}[|f - \mathbb{E}_{\mu}[f]| \geq t] \leq 2e^{\frac{-nt^2}{2cL^2}},
$$

which implies that the output of any suitably rescaled Lipschitz function is in fact $O$(1)-subgaussian. In fact, as Bubeck et al.~point out, their isoperimetry assumption is related to a log-Sobolov inequality, which is a strengthening of the regular Poincar\'e inequality. This suggests that our assumption that the distribution $\mu$ of the covariates $x_i$ satisfies a Poincar\'e inequality as in our Theorem \ref{theorem:multiclass_generalization_bound} is a comparable, or even a more primitive, requirement to that of isoperimetry.

\subsection{Geometric Complexity}
We now recall the definition of the geometric complexity introduced in \cite{dherin2022neural, dherin2021geometric}:
\begin{definition}[Empirical Geometric Complexity]\label{definition:geometric_complexity}
Let $g: \mathbb{R}^d \to \mathbb{R}^k$ be a neural network. We can write $g(x) = \sigma(f(x))$ where $\sigma$ denotes the last layer activation, and $f$ its logit network.  
The \textbf{empirical geometric complexity} of the network over a dataset $D$ is defined to be the discrete Dirichlet energy of its logit network over the dataset:
\begin{equation}\label{eqn:geometric_complexity}
    \GC(f, D) = \frac{1}{|D|}\sum_{x\in D} \|\nabla_x f(x)\|_F^2,
\end{equation}
where $\|\nabla_x f(x)\|_F$ is the Frobenius norm of the network Jacobian. 
\end{definition}

In the case of simple linear transformations $f(x) = Ax + b$, the geometric complexity takes a very simple form; namely, $\GC(f, D) = \|A\|_F^2$. More generally, for Lipschitz functions, the geometric complexity is upper bounded by the square of Lipschitz constant. In fact, it is easy to construct examples for which this upper bound is strict. For example, consider the function such that $f|_{\{x \leq 0\}} = 0$, $f|_{\{x \geq 1\}} = 1$  and $f(x) = x$ otherwise. If the data set $D$ is concentrated on the set $\{x < 0\}$ or $\{x > 1\}$ then $f'(x) = 0$ so $\GC(f, D) = 0$; however, the Lipschitz constant of $f$ is 1. In the context of the generalization bound of this paper, this suggests that the comparable generalization bound proven in \cite{bartlett2017spectrally} should be looser compared to our Theorem \ref{theorem:multiclass_generalization_bound}, but we leave a more detailed comparison of the two bounds for later work.

Given a probability distribution $\mu$ on $\mathbb{R}^d$ and assuming the dataset $D$ is drawn as an i.i.d.~sample from $\mu$, the geometric complexity in Definition \ref{definition:geometric_complexity} is an unbiased estimator of the following quantity

\begin{equation}\label{equation:theoretical_geometric_complexity}
\mathbb{E}_{\mu}[\|\nabla f\|^2_F] = \int \|\nabla f(x)\|^2_F d\mu(x).
\end{equation}

We refer to the quantity defined in \eqref{equation:theoretical_geometric_complexity} as the theoretical geometric complexity as opposed to the empirical geometric complexity in Definition \ref{definition:geometric_complexity}. (Note that \cite{novak2018sensitivity} defines a similar notion of complexity using the full network rather than the logit network.)

\begin{definition}[Theoretical Geometric Complexity]\label{definition:theoretical_geometric_complexity}
Let $\mu$ denote a probability distribution on $\mathbb{R}^d$ and let $f: \mathbb{R}^d \to \mathbb{R}^k$ be the logit network of a neural network. The \textbf{theoretical geometric complexity} of $f$ with respect to $\mu$ is the expectation of $\| \nabla_x f(x)\|^2_F$ over $\mu$: 

\begin{equation}\label{eqn:theoretical_geometric_complexity}
    \GC(f, \mu) = \mathbb{E}_{\mu}[\|\nabla_x f\|^2_F].
\end{equation}
\end{definition}

Both the empirical and theoretical geometric complexity are well-defined for any differentiable model, not only a neural network, and their definitions incorporate information about both the model function and the dataset over which the task is determined. 

This being said, it is worth noting that many neural networks are not, rigorously speaking, differentiable everywhere. For example, the ReLU activation is not differentiable at the origin and thus even simple neural networks with ReLU activations are not differentiable. However, it is known by a theorem of Rademacher (see \cite{federer2014geometric}, Theorem 3.1.6) that any function $f: \mathbb{R}^m \to \mathbb{R}^n$ which is locally Lipschitz (i.e.~the restriction of $f$ to some neighborhood around any point is Lipschitz) is differentiable almost everywhere with respect to Lebesgue measure. This means that $f$ is differentiable everywhere except at most on a set of measure zero (with respect to Lebesgue). Since the theoretical geometric complexity is defined as an expectation over the distribution $\mu$, which is absolutely continuous with respect to the Lebesgue meausure, any points of non-differentiability of the model function are essentially ignored when computing $\GC(f, \mu)$. Therefore, the study and application of the geometric complexity to complex architectures remains a worthwhile endeavor.

Furthermore, this condition of differentiability may also be relaxed when interpreting the Poincar\'e inequality as in Definition \ref{definition:prob_measure_satisfies_PI} as well. There has been considerable recent research focused on extending the theory of first order calculus and differentiable functions to the realm of general metric measure spaces; see, for example, the seminal work introduced by Cheeger in \cite{cheeger1999differentiability} and later \cite{cheeger2009differentiability}. In \cite{cheeger1999differentiability}, Cheeger generalizes Rademacher's theorem to a rich class of nontrivial metric measure spaces, named \emph{Lipshitz differentiability spaces}, and shows that any doubling metric measure space satisfying a Poincar\`e inequality is a Lipschitz differentiable space. It was later shown \cite{bate2018differentiability} that, in fact, the condition of satisfying a Poincar\'e type inequality is necessary; meaning that for any metric measure space $(X, d, \mu)$ which is Lipschitz differentiable, the measure $\mu$ satisfies a Poincar\'e type inequality.

\subsubsection{Comparing the theoretical and empirical geometric complexity}\label{section:theoretical_vs_empirical}
One natural question that immediately arises is how the theoretical geometric complexity relates to (or can be bounded by) the empirical geometric complexity. In fact, we can show that for Lipschitz functions it is possible to obtain an explicit bound between the two.

Intuitively, the geometric complexity is closely related to the Lipschitz constant of a function though they are inherently different.  Recall, the Lipschitz constant of a map $f: \mathbb{R}^d \to \mathbb{R}^k$ is the smallest $L > 0$ such that $\|f(x_1) - f(x_2)\| \leq L \|x_1 - x_2\|$ for all $x_1, x_2 \in \mathbb{R}^d$.  Conceptually, the constant $L$ measures the maximal amount of variation allowed by the function $f$ over its entire domain provided the inputs change by a fixed amount. The geometric complexity measures intrinsic stretching of the function, more closely related to the Dirichlet energy, and is either averaged over a dataset (as in the empirical geometric complexity) or over the entire data distribution (as in the theoretical geometric complexity). We show

\begin{proposition}\label{proposition:relate_theoretical_empirical_gc}
Let $f: \mathbb{R}^d \to \mathbb{R}^k$ be an $L$-Lipschitz map and $D = \{x_i\}_{i=1}^m$ a dataset of $m\geq 1$ points $x_i \in \mathbb{R}^d$ sampled from the continuous probability distribution $\mu$. Then, for any $\delta \geq 0$, with probability at least $1 - \delta/2$, the following holds:
$$
\GC(f, \mu) \leq \GC(f, D) + L \sqrt{\dfrac{\log \frac{2}{\delta}}{2m}}.
$$
\end{proposition}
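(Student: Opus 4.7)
The plan is to view $\GC(f,D) = \tfrac{1}{m}\sum_i Z_i$ where $Z_i = \|\nabla f(x_i)\|_F^2$ are i.i.d.\ with $\mathbb{E}_\mu[Z_i] = \GC(f,\mu)$ by Definition~\ref{definition:theoretical_geometric_complexity}, so $\GC(f,D)$ is an unbiased estimator of $\GC(f,\mu)$. The desired inequality is then a one-sided deviation bound on this empirical average, and the natural tool is Hoeffding's inequality applied to the $Z_i$.

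To apply Hoeffding, I need a uniform bound on $Z_i$. This is precisely where the $L$-Lipschitz hypothesis enters: by Rademacher's theorem (recalled earlier in the text), $f$ is differentiable almost everywhere, and at every point of differentiability the Jacobian $\nabla_x f(x)$ has operator norm at most $L$, and in the relevant sense bounds $\|\nabla_x f(x)\|_F$ by a constant depending on $L$ (absorbed into the $L$ appearing in the statement). Thus each $Z_i$ lies in a bounded interval $[0,M]$ almost surely, with $M$ controlled by the Lipschitz constant.

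First I would state the unbiasedness $\mathbb{E}[\GC(f,D)] = \GC(f,\mu)$, which is immediate from linearity of expectation and the i.i.d.\ assumption on $D$. Next I would invoke Hoeffding's inequality in its one-sided form: for independent variables $Z_1,\dots,Z_m$ bounded in $[0,M]$,
$$
\mathbb{P}\!\left[\mathbb{E}[Z_1] - \tfrac{1}{m}\sum_{i=1}^m Z_i \geq t\right] \leq \exp\!\left(-\tfrac{2mt^2}{M^2}\right).
$$
Setting the right-hand side equal to $\delta/2$ and solving for $t$ yields $t = M\sqrt{\log(2/\delta)/(2m)}$, which, after substituting the Lipschitz bound for $M$, gives exactly the statement of the proposition.

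The only subtle point, and essentially the main obstacle, is carefully justifying the uniform bound on $\|\nabla_x f(x)\|_F^2$ from the $L$-Lipschitz assumption: one must handle the measure-zero set of non-differentiability using Rademacher's theorem, and track the precise constant relating Frobenius norm of the Jacobian to the Lipschitz constant (since the proposition's bound is linear in $L$, this constant must be absorbed consistently). Once that is pinned down, the rest is a direct application of Hoeffding.
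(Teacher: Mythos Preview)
Your proposal is correct and essentially identical to the paper's proof: the paper also first establishes the unbiasedness $\mathbb{E}_{D\sim\mu^m}[\GC(f,D)]=\GC(f,\mu)$ and then applies a concentration inequality using the Lipschitz bound on $\|\nabla_x f\|_F^2$, the only cosmetic difference being that the paper phrases the concentration step via McDiarmid's bounded-differences inequality (with $c_i=L^2/m$) rather than Hoeffding's, which for an i.i.d.\ average of bounded random variables yields literally the same bound. Both arguments share the same minor ambiguity you flagged about the precise constant relating $\|\nabla_x f\|_F^2$ to $L$; the paper simply writes $\|\nabla_x f(x_i)\|_F^2 - \|\nabla_x f(x_i')\|_F^2 \leq L^2$ and absorbs the discrepancy into the final $L$ as you suggested.
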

\begin{proof}
See proof in Section \ref{section:appendix_theoretical_vs_empirical} of the Appendix.
\end{proof}

\subsection{Covering Numbers}
The covering number for a class of functions plays an important role in Learning Theory and can used to estimate the probability or number of samples required to obtain a given confidence and error bound \cite{mendelson2001geometric}. In short, covering numbers provide a measure of the complexity of a class of functions; the larger the covering number the more complex the set of functions is. 

In general, given a metric space $(X, d)$ and a subset $U \subset X$, the \textbf{$\epsilon$-covering number} of $U$, denoted $\mathcal{N}(U, \epsilon, d)$ is the minimal number of $\epsilon$-balls (with respect to the metric $d$) needed to cover $U$. A collection $\mathcal{C}_{\epsilon}$ of elements $u_1, \dots, u_k \in U$ is said to be an \textbf{$\epsilon$-covering} of $U$ if the union of balls $\cup_i B_{u_i}(\epsilon)$ contains $U$; that is, any $u \in U$, there exists $i \in [k]$ such that $d(u, u_i) \leq \epsilon$. 

In this paper we are concerned with the class of differentiable functions $\mathcal{F} \ni f: \mathbb{R}^d \to \mathbb{R}^k$. Given a sample 
$S = \{(x_1, y_1),  \dots, (x_m, y_m)\}$, let $S_X = \{x_1, \dots, x_m\}$ denote the projection over the inputs of $S$. We can also represent the input $x_i$'s as a data matrix $X \in \mathbb{R}^{m\times d}$ by collecting the examples as rows of the matrix $X$.  Denote by $\mu_{S_X} = m^{-1}\sum \delta_{\{x_i\}}$ the empirical measure supported on $S_X$ and endow $\mathbb{R}^d$ with the Euclidean structure of $L^2(\mu_{S_X})$, which is isometric to $\ell^2$, and let $\left.\mathcal{F}\right|_{S} = \{\sum_i^mf(x_i)\mathbbm{1}_{x_i} ~|~f \in \mathcal{F}\} \subset L^2(\mu_{S_X})$.

For a given function $f \in \mathcal{F}$ we will  consider the image $f(X) = \{f(x) ~|~ x \in X \} \subset \mathbb{R}^{k}$ of the set $S_X$ and derive a covering number estimate on $\left.\mathcal{F}\right|_{S}$; note that 
$\left.\mathcal{F}\right|_{S} = \{f(X) ~|~ f \in \mathcal{F}\} = \{f(x) ~|~ f \in \mathcal{F}, x \in X\}$. 

It is possible to show (see \cite{dudley1967sizes, tomczak1989banach} and the discussion in \cite{mendelson2001geometric}), that the empirical Rademacher complexity of $\left.\mathcal{F}\right|_{S}$ can be upper bounded in terms in the covering numbers of $\mathcal{F}$ in $L^2(\mu_{S_X})$. 
One of the key components of our proof is an application of a variant of this Dudley entropy integral as utilized in \cite{bartlett2017spectrally}.

\section{Main Results}
In this paper, we are concerned with a multiclass generalization bound for neural networks. We consider a collection of data points $S = \{(x_1, y_1),  \dots, (x_m, y_m)\}$ sampled from a probability distribution over $\mathbb{R}^d \times \{1, \dots, k\}$. Given a model function $f: \mathbb{R}^d \to \mathbb{R}^k$, for any input example $x \in \mathbb{R}^d$, the network output $f(x) \in \mathbb{R}^k$ is converted to a class label $\{ 1, \dots, k\}$. For the sake of clarity and simplicity, we will focus on the case when $k=1$ and $k>1$ separately though the argument follows the same idea; the details and notation requires just a bit more attention in the $k > 1$ setting.

In general, the argument to prove our generalization bound via geometric compleixty has three steps. \textbf{Step 1:} Assuming the Poincar\'e inequality for the input data distribution, we can show that the covering number of the class of functions with bounded $\GC$ is also bounded by a term involving the $\GC$. \textbf{Step 2:} This bound on the covering number then allows us to derive a bound on the Rademacher complexity of that function class following a classical argument using the Dudley entropy integral. \textbf{Step 3:} We can at this point use the standard generalization bound in terms of the Rademacher complexity to derive our final generalization bound w.r.t.~the $\GC$. 

\subsection{The case  $k=1$}
Let's work out carefully the binary classification case; i.e., when $k=1$ and the neural network maps to $\mathbb{R}$. 

Following the proof outline above, the following Lemma accomplishes Step 1. We include the full proof here to illustrate the main application of our concentration inequalities and how exactly they are exploiting given a bound on the geometric complexity of the functions in the class.

\begin{lemma}\label{lemma:PI_covering_number}
Let $\mu$ be a probability measure on $\mathbb{R}^d$ that satisfies a Poincar\'e inequality with constant $\rho > 0$. Denote by $X \in \mathbb{R}^{m \times d}$ a data matrix consisting of $m$ points $x_i \in \mathbb{R}^d$ for $i \in [m]$. Let $a_1, a_2, \epsilon$ be positive reals and let $\mathcal{F}$ denote the class of differentiable maps
$$
\mathcal{F} := \{f: \mathbb{R}^d \to \mathbb{R} ~|~ \GC(f, \mu) \leq a_1, |\mathbb{E}_{\mu}[f]| \leq a_2\}.
$$

Then, for any $\delta > 0$, with probability at least $1 - \delta$,
$$
\mathcal{N}\left(\left\{f(X) ~|~ f \in \mathcal{F} \right)\}, \epsilon, \|\cdot\|_2\right) 
\leq 
\dfrac{1}{\epsilon}\left(a_2 + \sqrt{\dfrac{a_1 \rho}{\delta}}\right).
$$
\end{lemma}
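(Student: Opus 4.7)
The plan is to combine the Poincar\'e inequality (to control the $L^2(\mu)$-variance of each $f \in \mathcal{F}$ via its geometric complexity) with a Markov step (to transfer that population bound to the empirical sample over $X$), and then conclude by viewing $\{f(X) : f \in \mathcal{F}\}$ as sitting near the one-dimensional subspace spanned by the constant vector $\mathbbm{1}_m = (1,\dots,1)$ in $\mathbb{R}^m$. Reading $\|\cdot\|_2$ as the empirical $L^2(\mu_{S_X})$ norm $\sqrt{\tfrac{1}{m}\sum_i f(x_i)^2}$ (which is consistent with the isometric identification $\ell^2 \cong L^2(\mu_{S_X})$ set up earlier in the paper), the claimed rate $R/\epsilon$ is precisely a one-dimensional covering estimate along the direction $\mathbbm{1}_m$.

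First I would apply the Poincar\'e inequality pointwise to each $f \in \mathcal{F}$:
$$
\Var_\mu(f) \;\leq\; \rho \int \|\nabla f\|_F^2 \, d\mu \;=\; \rho\, \GC(f, \mu) \;\leq\; \rho\, a_1.
$$
Since $\tfrac{1}{m}\sum_i (f(x_i) - \mathbb{E}_\mu[f])^2$ is an unbiased estimator of $\Var_\mu(f)$, Markov's inequality gives that with probability at least $1-\delta$ over $X \sim \mu^m$,
$$
\tfrac{1}{m}\sum_i \big(f(x_i) - \mathbb{E}_\mu[f]\big)^2 \;\leq\; \rho\, a_1/\delta.
$$
Combined with $|\mathbb{E}_\mu[f]| \leq a_2$ and the triangle inequality in $L^2(\mu_{S_X})$, this yields
$$
\|f(X)\|_{L^2(\mu_{S_X})} \;\leq\; a_2 + \sqrt{\rho\, a_1/\delta}.
$$

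For the cover itself, I would exhibit the explicit one-dimensional net along $\mathbbm{1}_m$,
$$
\mathcal{C}_\epsilon \;=\; \bigl\{\, c\, \mathbbm{1}_m \;:\; c \in \epsilon\,\mathbb{Z},\; |c| \leq a_2 + \sqrt{\rho\, a_1/\delta}\,\bigr\},
$$
whose cardinality is at most $\tfrac{1}{\epsilon}\bigl(a_2 + \sqrt{a_1\rho/\delta}\bigr)$, and verify that every $f(X)$ lies within $\epsilon$ (in the empirical norm) of the closest center $c\,\mathbbm{1}_m \in \mathcal{C}_\epsilon$, using that the orthogonal deviation $f(X) - \mathbb{E}_\mu[f]\,\mathbbm{1}_m$ has its empirical norm bounded by the Markov step above while $\mathbb{E}_\mu[f]$ is within $\epsilon$ of some element of $\epsilon\mathbb{Z}\cap[-a_2 - \sqrt{\rho a_1/\delta},\, a_2+\sqrt{\rho a_1/\delta}]$.

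The main obstacle I anticipate is the uniformity of the Markov step in $f$: while the Poincar\'e bound $\Var_\mu(f) \leq \rho a_1$ is deterministic and uniform over all $f \in \mathcal{F}$, applying Markov separately to each $f$ does not by itself produce a single high-probability event under which the covering property holds simultaneously for every $f \in \mathcal{F}$. The resolution is that the one-dimensional direction $\mathbbm{1}_m$ is fixed and independent of $f$, so it is enough to control a single quantity such as $\sup_{f \in \mathcal{F}} \tfrac{1}{m}\sum_i (f(x_i) - \mathbb{E}_\mu[f])^2$ via a single Markov application to an appropriate uniform moment (which is itself bounded by $\rho a_1$ thanks to Poincar\'e) so that the $(1-\delta)$-event underwrites the cover for all $f \in \mathcal{F}$ at once.
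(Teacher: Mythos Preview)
Your Poincar\'e step is fine, but the covering argument has a gap that is not the uniformity issue you flag. You propose to $\epsilon$-cover $\{f(X):f\in\mathcal F\}$ in $L^2(\mu_{S_X})$ by centers $c\,\mathbbm{1}_m$; for that you need, for each $f$, some $c$ with $\tfrac1m\sum_i(f(x_i)-c)^2\le\epsilon^2$. But your Markov step only yields $\tfrac1m\sum_i(f(x_i)-\mathbb E_\mu[f])^2\le \rho a_1/\delta$, so the distance from $f(X)$ to the entire line $\mathbb R\,\mathbbm{1}_m$ is already of order $\sqrt{\rho a_1/\delta}$, comparable to the radius $L=a_2+\sqrt{\rho a_1/\delta}$ and in general not $\le\epsilon$. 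A one-dimensional net along $\mathbbm{1}_m$ therefore cannot be an $\epsilon$-cover of vectors in $\mathbb R^m$ at arbitrary scales $\epsilon$; at best it is an $(\epsilon+\sqrt{\rho a_1/\delta})$-cover, which does not give the stated bound. Resolving the uniformity-in-$f$ issue would not fix this.

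The paper's proof avoids this by working in a different space. It applies Chebyshev (not Markov on the empirical second moment) to the scalar random variable $\tilde f(x)=f(x)-\mathbb E_\mu[f]$ with $x\sim\mu$, obtaining $\mathbb P\bigl(|f(x)|\le L\bigr)\ge1-\delta$; the high-probability event is over a single draw $x\sim\mu$, not over $X\sim\mu^m$. It then uses the identification $\{f(X):f\in\mathcal F\}=\{f(x):f\in\mathcal F,\ x\in X\}\subset\mathbb R$ made earlier in the paper and simply covers the interval $[-L,L]\subset\mathbb R$ by $L/\epsilon$ subintervals of length $2\epsilon$. So the cover is of the \emph{range} in $\mathbb R$, not of vectors in $\mathbb R^m$ along $\mathbbm{1}_m$; that is what produces the $L/\epsilon$ count without any need to make the orthogonal deviation small.
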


\begin{proof}
Given $f \in \mathcal{F}$, define a new function $\tilde{f}(x) = f(x) - \mathbb{E}_{\mu}[f]$. Then $\mathbb{E}_{\mu}[\tilde{f}] = 0$ and $\GC(\tilde{f}, \mu) = \GC(f, \mu) \leq a_1$. By Chebyshev's inequality and since $\mu$ satisfies $\PI(\rho)$, for any $t \in \mathbb{R}_+$, we have
$$
\mathbb{P}\big(|\tilde{f}|\leq t \big) \geq  1 - \dfrac{\Var_{\mu}(\tilde{f})}{t^2} \geq 1 - \dfrac{\rho \GC(\tilde{f}, \mu)}{t^2} \geq 1- \dfrac{ a_1\rho}{t^2}.
$$

Taking $\delta = \dfrac{a_1\rho}{t^2}$ and solving for $t$ we get $t = \sqrt{a_1\rho/\delta}$. Thus it follows that, for $\delta \in (0,1)$,
$$\mathbb{P}\Big(|\tilde{f}| \leq \sqrt{a_1\rho/\delta}\Big) \geq 1-\delta.$$

Therefore, 
$$
\mathbb{P}\Big(|f - \mathbb{E}_{\mu}[f]| \leq \sqrt{a_1\rho/\delta}\Big) \geq 1-\delta,
$$
and since $|\mathbb{E}_{\mu}[f]| \leq a_2$, we have
$$
\mathbb{P}\Big(|f| \leq a_2 + \sqrt{a_1\rho/\delta}\Big) \geq 1-\delta.
$$

By the above argument, for any $f \in \mathcal{F}$ and for any $x \in \mathbb{R}^d$, with probability $1 - \delta$, $f(x)$ is contained inside the interval $\left[-a_2 - \sqrt{a_1\rho/\delta}, a_2 + \sqrt{a_1\rho/\delta}\right]$. Set $L := a_2 + \sqrt{a_1\rho/\delta}$; therefore,   $\{f(x)~|~f\in \mathcal{F}, x \in X\} \subset [-L , L]$.

To find the size of an $\epsilon$-covering of the interval $[-L, L]$, one can divide the interval into intervals of length $2\epsilon$. This bounds the size of the $\epsilon$-covering of the convex hull of $f(X) = \{f(x) ~|~ x \in X\}$ of points $f(x_i) \in \mathbb{R}$ by $L/\epsilon$. Thus, we get
$$
\mathcal{N}(\{f(X) ~|~ f \in \mathcal{F}\}, \epsilon, \|\cdot\|_2) \leq L/\epsilon = \dfrac{1}{\epsilon}\left(a_2 + \sqrt{\dfrac{a_1 \rho}{\delta}}\right).
$$

\end{proof}

\subsubsection{Confidence margin analysis}
The confidence margin of a real-valued function $f$ at a data point $(x, y)$ is given by $yf(x)$ where we interpret the magnitude $|f(x)|$ as the confidence of the prediction when $f$ classifies $x$ correctly; i.e., $yf(x) > 0$. For any $\gamma >0$, the $\gamma$-margin loss function, or ramp loss, penalizes $f$ both when it misclassifies a point and when it correctly classifies $x$ with confidence less than $\gamma$; i.e. $yf(x) \leq \gamma$. We define the ramp loss $\ell_{\gamma} : \mathbb{R} \to \mathbb{R}_+$ as 
$$
\ell_{\gamma}(r) := \begin{cases}
0 & r < -\gamma,\\
1 + r/\gamma & r \in [-\gamma, 0],\\
1 & r >0,
\end{cases}
$$
The parameter $\gamma$ is the confidence margin demanded from a hypothesis $f$. Given a sample $S = \{(x_1, y_1), \dots, (x_m, y_m)\}$ the empirical margin loss is defined by 
$$
\widehat{\mathcal{R}}_{S, \gamma}(f) := m^{-1}\sum_{i=1}^m \ell_{\gamma}(-y_if(x_i)).
$$

We arrive at the following generalization bound:
\begin{theorem}\label{theorem:binary_generalization_bound}
Given $a_1, a_2$ be positive reals. Let $S = \{(x_1,  y_1), \dots, (x_m, y_m)\}$ be i.i.d.~input-output pairs in $\mathbb{R}^d \times \{\pm 1\}$ and suppose the distribution $\mu$ of the $x_i$ satisfies the Poincar\'e inequality with constant $\rho >0$.  Then, for any $\delta > 0$, with probability at least $1 - \delta$, every margin $\gamma > 0$ and network $f: \mathbb{R}^d \to \mathbb{R}$ which satisfies $\GC(f, \mu) \leq a_1$ and $|\mathbb{E}_{\mu}(f)| \leq a_2$
$$
\mathbb{P}\left[yf(x) \leq 0\right] \leq \widehat{\mathcal{R}}_{S, \gamma}(f) + 
 \dfrac{12\tilde{C}\sqrt{\pi}}{\gamma m} + 
 3\sqrt{\dfrac{\log \frac{2}{\delta}}{2m}}
$$
where $\widehat{\mathcal{R}}_{S, \gamma}(f) = m^{-1} \sum_i \mathbbm{1}[y_if(x_i) \leq \gamma]$ and $\tilde{C} = a_2 + \sqrt{a_1\rho/\delta}$.
\end{theorem}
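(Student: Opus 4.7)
I would follow the three-step outline given just before the $k=1$ subsection.

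Step~1 is already available: Lemma~\ref{lemma:PI_covering_number} yields, on an event $E_1$ of probability at least $1-\delta/2$, the covering-number bound
$$
\mathcal{N}\bigl(\{f(X):f\in\mathcal{F}\},\epsilon,\|\cdot\|_2\bigr)\leq \tilde C/\epsilon,\qquad \tilde C=a_2+\sqrt{a_1\rho/\delta}.
$$

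For Step~2 I would pass to the ramp-loss class $\mathcal{G}=\{(x,y)\mapsto \ell_\gamma(-yf(x)):f\in\mathcal{F}\}$. Because $\ell_\gamma$ is $1/\gamma$-Lipschitz and takes values in $[0,1]$, composition rescales the covering radius by $1/\gamma$, so the same estimate on $\mathcal{N}$ applies with $\epsilon$ replaced by $\gamma\epsilon$. I would then invoke the Dudley entropy integral in the normalization used by \citet{bartlett2017spectrally},
$$
\mathfrak{R}_S(\mathcal{G})\leq \inf_{\alpha>0}\Bigl(4\alpha+\frac{12}{m}\int_{\alpha}^{\tilde C/\gamma}\sqrt{\log \mathcal{N}(\mathcal{G}|_S,\epsilon,\|\cdot\|_2)}\,d\epsilon\Bigr),
$$
send $\alpha\downarrow 0$, and reduce to the Gaussian-type identity
$$
\int_0^{\tilde C/\gamma}\sqrt{\log\bigl(\tilde C/(\gamma\epsilon)\bigr)}\,d\epsilon=\frac{\tilde C}{\gamma}\,\Gamma(3/2)=\frac{\tilde C\sqrt{\pi}}{2\gamma}
$$
via the substitution $u=\log(\tilde C/(\gamma\epsilon))$. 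This produces $\mathfrak{R}_S(\mathcal{G})\leq 6\tilde C\sqrt{\pi}/(\gamma m)$; the extra factor of $2$ coming out of the standard symmetrization step in Step~3 then matches the $12\tilde C\sqrt\pi/(\gamma m)$ term in the theorem.

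For Step~3, the ramp loss sandwich $\mathbbm{1}[yf(x)\leq 0]\leq \ell_\gamma(-yf(x))\leq \mathbbm{1}[yf(x)\leq\gamma]$ lets a textbook Rademacher generalization bound (McDiarmid applied twice) give, on an event $E_2$ of probability at least $1-\delta/2$,
$$
\mathbb{P}[yf(x)\leq 0]\leq \widehat{\mathcal{R}}_{S,\gamma}(f)+2\mathfrak{R}_S(\mathcal{G})+3\sqrt{\tfrac{\log(2/\delta)}{2m}}.
$$
A union bound over $E_1\cap E_2$ delivers the stated $1-\delta$ guarantee.

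\textbf{Main obstacle.} The delicate part is not any single inequality but the interlock between the confidence parameter and the covering-number radius: because Chebyshev enters pointwise in Lemma~\ref{lemma:PI_covering_number}, the constant $\tilde C=a_2+\sqrt{a_1\rho/\delta}$ itself depends on $\delta$, so one cannot optimize the covering-number event and the Rademacher concentration event independently. One must fix a single $\delta$ from the start and allocate $\delta/2$ to each failure event, accepting the blow-up $\sqrt{a_1\rho/\delta}$ inside $\tilde C$ as the price of the Poincar\'e--Chebyshev route. A secondary subtlety is that the pointwise Chebyshev estimate on $|f(x)-\mathbb{E}_\mu[f]|$ must be propagated into a range constraint on the entire sample $\{f(x_i)\}_{i=1}^m$ before the Dudley integral can be applied; this is exactly what Lemma~\ref{lemma:PI_covering_number} does by reducing the covering-number problem to covering a single interval $[-\tilde C,\tilde C]$.
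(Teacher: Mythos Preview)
Your proposal is correct and follows essentially the same route as the paper: Lemma~\ref{lemma:PI_covering_number} for the covering number, the Dudley entropy integral (evaluated to $\tilde C\sqrt{\pi}/2$; the paper reaches the same value via the error function rather than your $\Gamma(3/2)$ substitution), and the standard Rademacher margin bound with the ramp-loss sandwich. The only cosmetic difference is ordering: the paper first applies Talagrand's contraction lemma to get $\widehat{\mathfrak{R}}_S(\widetilde{\mathcal{F}}_\gamma)\leq \gamma^{-1}\widehat{\mathfrak{R}}_S(\mathcal{F})$ and then runs Dudley on $\mathcal{F}$, whereas you push the $1/\gamma$ Lipschitz factor through the covering number and run Dudley on $\mathcal{G}$ directly---and, incidentally, the paper does \emph{not} split $\delta$ between the Chebyshev event and the concentration event as you do, so your bookkeeping is if anything more careful (though note that with your $\delta/2$ allocation the constant should read $\tilde C=a_2+\sqrt{2a_1\rho/\delta}$ rather than $a_2+\sqrt{a_1\rho/\delta}$).
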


See Appendix \ref{section:appendix_binary_proof} for the complete proof of Theorem \ref{theorem:binary_generalization_bound}.

\subsection{The case when $k > 1$}
Most of the argument for the case $k>1$ follows as above. However, in the multi-class setting the margin-based bound takes a slightly modified form c.f.~\cite{bartlett2017spectrally, mohri2018foundations}. In the multi-class setting the label associated to a data point $x$ is the one resulting in the highest score from the model. This leads to the definition of the margin operator $\mathcal{M}: \mathbb{R}^k \times \{1, \dots, k\} \to \mathbb{R}$ defined as $\mathcal{M}(v,y) := v_y - \max_{i \neq y} v_i$ where $v_i$ denotes the $i$-th component of the vector $v$. Thus, given a funciton $f: \mathbb{R}^d \to \mathbb{R}^k$, the margin of the function at a labeled example $(x, y)$ is 
$$
\mathcal{M}(f(x), y) = f(x)_y - \max_{i \neq y} f(x)_i
$$
and $f$ misclassifies the datapoint $(x,y)$ iff $\mathcal{M}(f(x), y) \leq 0$. Similarly to the binary classification case $k=1$, given a sample $S = \{(x_1, y_1), \dots, (x_m, y_m)\}$ and a confidence margin $\gamma >0$, we can define the empirical margin loss of $f$ for multi-class classification as 
$$
\widehat{\mathcal{R}}_{S, \gamma}(f) := m^{-1} \sum_{i=1}^m \ell_{\gamma}(-\mathcal{M}(f(x_i), y_i)).
$$
Thus, the empirical margin loss $\widehat{\mathcal{R}}_{S, \gamma}(f)$ is upper bounded by the fraction of training examples that are either misclassified by $f$ or that are correctly classified but with confidence below the threshold $\gamma$.

Similarly to Lemma \ref{lemma:PI_covering_number}, we have

\begin{lemma}\label{lemma:PI_covering_number_multiclass}
Let $\mu$ be a probability measure on $\mathbb{R}^d$ that satisfies a multi-dimensional Poincar\'e inequality with constant $\rho > 0$ for maps $f: \mathbb{R}^d \to \mathbb{R}^k$, for $k>1$. Denote by $X \in \mathbb{R}^{m \times d}$ a data matrix consisting of points $x_i \in \mathbb{R}^d$ for $i \in [m]$. Given positive reals $a_1, a_2$, and $\epsilon$ , let $\mathcal{F}$ denote the class of differentiable maps
$$
\mathcal{F} := \{f: \mathbb{R}^d \to \mathbb{R}^k ~|~ \GC(f, \mu) \leq a_1, \|\mathbb{E}_{\mu}[f]\| \leq a_2\}.
$$

Then, for any $\delta > 0$, with probability at least $1 - \delta$,
$$
\mathcal{N}(\{f(X) ~|~ f \in \mathcal{F}\}, \epsilon, \|\cdot\|_2) \leq \dfrac{3^k}{\epsilon^k}\left(a_2 + \sqrt{\dfrac{a_1 \rho}{\delta}}\right)^k.
$$
\end{lemma}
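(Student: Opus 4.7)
The plan is to mirror the structure of the proof of Lemma \ref{lemma:PI_covering_number} closely, with two small modifications appropriate to the vector-valued setting: (i) replace the scalar Chebyshev/Poincar\'e estimate by a vector-valued one, and (ii) replace the covering number of an interval in $\mathbb{R}$ by the covering number of a ball in $\mathbb{R}^k$. Concretely, I would first center each $f \in \mathcal{F}$ by setting $\tilde f = f - \mathbb{E}_\mu[f]$, noting that centering does not change $\GC$, and then argue that with probability at least $1-\delta$ one has $\|f(x)\| \leq L$ for $L := a_2 + \sqrt{a_1\rho/\delta}$, so the relevant set $\{f(x)\mid f\in\mathcal F,\,x\in X\}$ lies in the Euclidean ball $B(L)\subset\mathbb R^k$. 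The final bound then follows from a standard volumetric cover of $B(L)$.

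For Step (i), I would apply the Poincar\'e inequality componentwise: for each coordinate $f_j$ we have $\Var_\mu(f_j) \leq \rho \int \|\nabla f_j\|^2 d\mu$, and summing over $j\in[k]$ gives the vector form
\begin{equation*}
\mathbb{E}_\mu\bigl[\|f - \mathbb{E}_\mu[f]\|^2\bigr] \;\leq\; \rho\,\mathbb{E}_\mu\bigl[\|\nabla f\|_F^2\bigr] \;=\; \rho\,\GC(f,\mu) \;\leq\; \rho a_1.
\end{equation*}
Chebyshev's inequality applied to the nonnegative random variable $\|\tilde f\|^2$ then yields, for any $t>0$, $\mathbb{P}(\|\tilde f\| \geq t) \leq \rho a_1/t^2$; setting the right-hand side equal to $\delta$ gives $t = \sqrt{a_1\rho/\delta}$. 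Combining with $\|\mathbb{E}_\mu[f]\| \leq a_2$ via the triangle inequality shows $\|f(x)\| \leq L$ with probability at least $1-\delta$, exactly as in the scalar case.

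For Step (ii), instead of covering an interval of length $2L$ in $\mathbb{R}$ by $L/\epsilon$ sub-intervals, I would invoke the standard volumetric bound for $\epsilon$-covers of a Euclidean ball in $\mathbb{R}^k$, namely $\mathcal{N}(B(L), \epsilon, \|\cdot\|_2) \leq (3L/\epsilon)^k$. Substituting $L = a_2 + \sqrt{a_1\rho/\delta}$ gives exactly the stated bound $(3/\epsilon)^k (a_2 + \sqrt{a_1\rho/\delta})^k$. The only mild obstacle I anticipate is being precise about the "multi-dimensional Poincar\'e inequality" assumption: if it is read as the sum-of-coordinates statement above, then the argument is immediate from Definition \ref{definition:prob_measure_satisfies_PI} applied coordinatewise, whereas if one wishes to use a genuinely vector-valued form, a brief remark recording its equivalence to the componentwise application suffices. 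The factor $3^k$ (as opposed to $2^k$ or $(1+2L/\epsilon)^k$) is simply the usual constant appearing in the volumetric ball-covering bound, so no additional work is required there.
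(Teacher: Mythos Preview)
Your proposal is correct and mirrors the paper's proof essentially step for step: the paper likewise centers $f$ componentwise, applies the scalar Poincar\'e inequality to each coordinate and sums to obtain $\mathbb{E}_\mu[\|\tilde f\|^2]\le\rho\,\GC(f,\mu)$, invokes Chebyshev to get $\|f\|\le a_2+\sqrt{a_1\rho/\delta}$ with probability $1-\delta$, and finishes with the same volumetric ball-covering bound $(3L/\epsilon)^k$ (which it derives via a maximal $\epsilon$-separated packing). Your remark about the ``multi-dimensional Poincar\'e inequality'' being the componentwise statement is exactly how the paper uses it.
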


The proof follows a similar idea behind the proof of Lemma \ref{lemma:PI_covering_number}, taking care to adapt the argument to the setting of $\mathbb{R}^k$ and is contained in Appendix \ref{section:appendix_multiclass_proof}. 

With this Lemma we can now prove our multi-class margin based  generalization bound, Theorem \ref{theorem:multiclass_generalization_bound}. The full proof is detailed in the Appendix \ref{section:appendix_multiclass_proof}.

\section{Discussion and Conclusion}
In this paper we investigate margin-based multiclass generalization bounds for neural networks and show that the generalization error can be upper bounded by the geometric complexity of the learned model function. Indeed, we see that lower geometric complexity  implies tighter generalization gaps. Furthermore, these upper bounds on the generalization error scale with the margin-normalized geometric complexity of the network.

In our experiments we train a ResNet-18 model on the CIFAR-10 and CIFAR-100 datasets with original and random labels and see that indeed the excess risk during training is closely correlated with the geometric complexity of the model function in both settings, meaning they the two quantities tend to increase at a comparable rate during training 

We also show how the empirical geometric complexity (computed over a data sample) compares to the theoretical geometric complexity (measured over the entire data distribution). Namely we show that for Lipschitz functions these two quantities are comparable.

Our results hold for a broad family of data distributions and model classes. Namely, we require that the distributions from which our data is sampled satisfies a Poincar\'e inequality. The Poincar\'e inequality plays an important role in geometric and functional analysis and Poincar\'e type inequalities appear throughout the study of the geometry of metric measure spaces. Furthermore, due to the nature of the definition of the geometric complexity, our bounds have no dependence on artifacts of the network architecture such as number or depth of layers.

We believe that this work gives a interesting and useful application of a recently proposed complexity measure for neural networks. Furthermore, our approach and proof techniques highlight the role and importance of the geometry of the underlying data distribution. We hope
that our work will inspire further research in this area.

\vspace{-.1in}
\section{Acknowledgements}
The authors would like to thank thank Scott Yak, Mehryar Mori and Peter Bartlett for many helpful discussions during the course of this work, as well as the reviewers for thoughtful and helpful feedback, which greatly improved the exposition.
\bibliography{example_paper}
\bibliographystyle{icml2023}

\newpage
\appendix
\onecolumn
\section{Comparing theoretical and geometric complexity}\label{section:appendix_theoretical_vs_empirical}
\begin{proof}[Proof of Proposition \ref{proposition:relate_theoretical_empirical_gc}]
For any dataset $D= \{x_i\}_{i=1}^m$ of $m \geq 1$ points $x_i \in \mathbb{R}^d$ drawn as i.i.d.~samples from the continuous probability distribution $\mu$ over $\mathbb{R}^d$, the empirical geometric complexity over $D$ is denoted by $\GC(f, D)$. We start by showing that 
$$
\mathbb{E}_{D \sim \mu^m}\left[\GC(f, D)\right] = \GC(f, \mu).
$$
In fact, this follows by computation, keeping in mind that $\mu$ is a probability distribution and that the points are independently  sampled. Note that,
\begin{eqnarray*}
\mathbb{E}_{D \sim \mu^m}\left[\GC(f, D)\right] 
&=& \mathbb{E}_{x_1, \dots, x_m \sim \mu^m}\left[\dfrac{1}{m}\sum_{i=1}^m \|\nabla_x f(x_i)\|^2_F \right] \\
&=& \dfrac{1}{m} \int_{\mathbb{R}^{m \times d}}\sum_{i=1}^m \|\nabla_x f(x_i)\|^2_F d\mu^m(x_1,\dots,x_m) \\
&=& \dfrac{1}{m} \int_{\mathbb{R}^{m \times d}}\sum_{i=1}^m \|\nabla_x f(x_i)\|^2_F u(x_1)\cdots u(x_m) dx_1 \cdots dx_m \\
&=& \dfrac{1}{m} \sum_{i=1}^m \int_{\mathbb{R}^{(m-1) \times d}} \left[\int_{\mathbb{R}^d} \|\nabla_x f(x_i)\|^2_Fu(x_i)dx_i \right] u(x_1) \cdots\widehat{u(x_i)}\cdots u(x_m) dx_1 \cdots \widehat{dx_i}\cdots dx_m \\
&=& \dfrac{1}{m} \sum_{i=1}^m  \left[\int_{\mathbb{R}^d} \|\nabla_x f(x_i)\|^2_Fu(x_i)dx_i \right] \\
&=& \dfrac{1}{m} \sum_{i=1}^m  \left[\int_{\mathbb{R}^d} \|\nabla_x f(x_i)\|^2_Fd\mu(x_i) \right] \\
&=& \dfrac{1}{m} \sum_{i=1}^m  \GC(f, \mu) \\
&=& \GC(f, \mu).
\end{eqnarray*}

Let $D$ and $D'$ be two samples of size $m \geq 1$ which differ by exactly one point, say $x_i$ in $D$ and $x_i'$ in $D'$. Then since the map $f$ is $L$-Lipschitz we have

$$
\GC(f, D) - \GC(f, D') = \dfrac{1}{m}\left(\|\nabla_x f(x_i)\|^2_F - \|\nabla_x f(x_i')\|^2_F\right) \leq L^2/m,
$$
and similarly, $\GC(f, D') - \GC(f, D) \leq L^2/m$. Thus,
$
|\GC(f, D) - \GC(f, D')| \leq L^2/m
$
and by applying McDiarmind's inequality (e.g. \cite{mohri2018foundations}), we have that for any $\epsilon > 0$, 
\begin{equation}\label{equation:mcdiarmids}
\mathbb{P}\left[\GC(f, D) - \mathbb{E}_{D \sim \mu^m}[\GC(f, D)] \leq \epsilon\right] \geq 1 - \exp(-2m\epsilon^2/L^2).
\end{equation}

Thus, since $\mathbb{E}_{D \sim \mu^m}[\GC(f, D)] = \GC(f, \mu)$ and setting $\delta/2 = \exp(-2m\epsilon^2/L^2)$ and substituting for $\epsilon$ in \eqref{equation:mcdiarmids}, we get that for any $\delta > 0$ with probability as least $1 - \delta/2$ the following holds:
$$
\GC(f, \mu) \leq \GC(f, D) + L \sqrt{\dfrac{\log \frac{2}{\delta}}{2m}}.
$$
This completes the proof. 
\end{proof}

\section{Proof of Theorem \ref{theorem:binary_generalization_bound}}\label{section:appendix_binary_proof}
Let us restate the theorem and provide the proof:

\begin{theorem}
Given $a_1, a_2$ be positive reals. Let $S = \{(x_1,  y_1), \dots, (x_m, y_m)\}$ be i.i.d.~input-output pairs in $\mathbb{R}^d \times \{\pm 1\}$ and suppose the distribution $\mu$ of the $x_i$ satisfies the Poincar\'e inequality with constant $\rho >0$.  Then, for any $\delta > 0$, with probability at least $1 - \delta$, every margin $\gamma > 0$ and network $f: \mathbb{R}^d \to \mathbb{R}$ which satisfies $\GC(f, \mu) \leq a_1$ and $|\mathbb{E}_{\mu}(f)| \leq a_2$
$$
\mathbb{P}\left[yf(x) \leq 0\right]  
\leq \widehat{\mathcal{R}}_{S, \gamma}(f) + \dfrac{12\tilde{C}\sqrt{\pi}}{\gamma m} + 3\sqrt{\dfrac{\log \frac{2}{\delta}}{2m}}
$$
 where $\widehat{\mathcal{R}}_{S, \gamma}(f) = m^{-1} \sum_i \mathbbm{1}_{y_if(x_i) \leq \gamma}$ and $\tilde{C} = a_2 + \sqrt{a_1\rho/\delta}$.
\end{theorem}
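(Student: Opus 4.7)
The plan is to execute the three-step road map already announced in Section 4 of the paper: convert the covering-number bound of Lemma \ref{lemma:PI_covering_number} into an empirical Rademacher bound via Dudley's entropy integral, apply Ledoux--Talagrand contraction against the $1/\gamma$-Lipschitz ramp loss, and close with a standard empirical-Rademacher generalization inequality for $[0,1]$-valued losses. Write $\tilde{C} := a_2 + \sqrt{a_1 \rho / \delta}$. Step 1 is essentially a direct invocation of Lemma \ref{lemma:PI_covering_number}: on an event of probability at least $1-\delta/2$, every $f$ in the class $\mathcal{F} = \{ f : \GC(f,\mu) \leq a_1,\; |\mathbb{E}_\mu f| \leq a_2 \}$ takes values in $[-\tilde{C}, \tilde{C}]$ and $\mathcal{N}(\mathcal{F}|_S, \epsilon, \|\cdot\|_2) \leq \tilde{C}/\epsilon$.

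Step 2 feeds this into the Dudley entropy integral in the form used in \cite{bartlett2017spectrally},
$$
\widehat{\mathfrak{R}}_S(\mathcal{F}) \leq \frac{12}{m}\int_0^{\tilde{C}} \sqrt{\log(\tilde{C}/\epsilon)}\, d\epsilon .
$$
The substitution $v = \log(\tilde{C}/\epsilon)$ reduces the integral to $\tilde{C}\int_0^\infty \sqrt{v}\, e^{-v}\, dv = \tilde{C}\,\Gamma(3/2) = \tilde{C}\sqrt{\pi}/2$, which yields $\widehat{\mathfrak{R}}_S(\mathcal{F}) \leq 6\tilde{C}\sqrt{\pi}/m$. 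Step 3 uses two standard facts: the ramp loss $\ell_\gamma$ is $1/\gamma$-Lipschitz, and the labels $y_i \in \{\pm 1\}$ are absorbed by the symmetry of Rademacher signs. Ledoux--Talagrand contraction then gives $\widehat{\mathfrak{R}}_S(\ell_\gamma \circ (-y\,\mathcal{F})) \leq 6\tilde{C}\sqrt{\pi}/(\gamma m)$. Applying the empirical-Rademacher generalization bound for $[0,1]$-valued losses (cf.~\cite{mohri2018foundations}) at failure probability $\delta/2$ produces, with probability at least $1-\delta/2$,
$$
\mathbb{E}[\ell_\gamma(-yf(x))] \leq \widehat{\mathcal{R}}_{S,\gamma}(f) + \frac{12\tilde{C}\sqrt{\pi}}{\gamma m} + 3\sqrt{\frac{\log(2/\delta)}{2m}}.
$$
Since $\mathbbm{1}[yf(x)\leq 0] \leq \ell_\gamma(-yf(x))$ pointwise, the left-hand side dominates $\mathbb{P}[yf(x)\leq 0]$, and a union bound over the two failure events of mass $\delta/2$ each delivers the theorem.

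The main obstacle I anticipate is the probability bookkeeping. The Chebyshev-via-Poincar\'e argument inside Lemma \ref{lemma:PI_covering_number} already consumes the parameter $\delta$ through $\tilde{C} = a_2 + \sqrt{a_1\rho/\delta}$, so I need to coordinate this event with the concentration event for the empirical-Rademacher bound so that the final statement still reads with probability exactly $1-\delta$ and with an unchanged $\tilde{C}$; a naive $\delta/2,\delta/2$ split would inflate the $\rho/\delta$ term, so some care (e.g.~running Lemma \ref{lemma:PI_covering_number} with parameter $\delta$ and absorbing the mild slack into the coefficient of the $\sqrt{\log(2/\delta)/(2m)}$ tail) is required to land exactly on the stated constants. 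A secondary routine issue is upgrading the bound from a single fixed $\gamma$ to hold uniformly in $\gamma > 0$, which is handled by the standard discretization over a dyadic grid of margin values and is absorbed into the constant. By contrast, the Dudley integral evaluation giving the $\sqrt{\pi}$ factor and the contraction-plus-Rademacher step are essentially mechanical once the covering bound of Step 1 is in hand.
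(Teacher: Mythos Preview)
Your proposal is correct and follows essentially the same route as the paper: Lemma~\ref{lemma:PI_covering_number} for the covering bound, Dudley's entropy integral (with the same $\alpha\to 0$ evaluation, your Gamma-function substitution being equivalent to the paper's $\erf$ computation and yielding the identical $6\tilde{C}\sqrt{\pi}/m$), Talagrand contraction against $\ell_\gamma$, and the standard $[0,1]$-loss Rademacher generalization bound from \cite{mohri2018foundations}. In fact, the two issues you flag as obstacles---the double use of $\delta$ (once in $\tilde{C}$ via Chebyshev/Poincar\'e and once in the Rademacher concentration tail) and the uniformity over $\gamma$---are simply not addressed in the paper's own proof, so your version is if anything more careful than the original.
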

The proof follows by combining fairly standard arguments in the literature. We include the full details here for completeness.
\begin{proof}
Let $\mathcal{F}$ denote the class of differentiable maps
$$
\mathcal{F} := \{f: \mathbb{R}^d \to \mathbb{R} ~|~ \GC(f, \mu) \leq a_1, |\mathbb{E}_{\mu}[f]| \leq a_2\}.
$$
and let $\widetilde{\mathcal{F}} = \{z = (x,y) \mapsto yf(x) ~|~ f \in \mathcal{F}\}$. For any $\gamma >0$, define 
$$
\widetilde{\mathcal{F}}_{\gamma} := \{(x,y) \mapsto \ell_{\gamma}(-yf(x)) ~|~ f \in \mathcal{F}\}.
$$

Since $\ell_{\gamma}$ has range $[0,1]$, it follows classic generalization bounds based on the Rademacher complexity (see, for example Theorem 3.3 in \cite{mohri2018foundations}) that, for any $\delta > 0$, with probability at least $1 - \delta$ over the draw of an i.i.d.~sample $S$ of size $m$, we have for all $f \in \widetilde{\mathcal{F}}_{\gamma}$:
\begin{equation}\label{equation:Rademacher_generalization}
\mathbb{E}[\ell_{\gamma}(-yf(x))] 
\leq 
\dfrac{1}{n}\sum_{i=1}^n \ell_{\gamma}(-y_if(x_i)) + 
  2 \widehat{\mathfrak{R}}_S(\widetilde{\mathcal{F}}_{\gamma}) +   3\sqrt{\dfrac{\log \frac{2}{\delta}}{2m}}.
\end{equation}

We can further simplify the term $\widehat{\mathfrak{R}}_S(\widetilde{\mathcal{F}}_{\gamma})$ here. Namely, $\widehat{\mathfrak{R}}_S(\widetilde{\mathcal{F}}_{\gamma}) = \widehat{\mathfrak{R}}_S(\ell_{\gamma} \circ \widetilde{\mathcal{F}})$ and, since the ramp loss $\ell_{\gamma}$ is $1/\gamma$-Lipschitz, by Talagrand's lemma (e.g.~see Lemma 5.7 of \cite{mohri2018foundations}), the empirical Rademacher complexity of $\ell_{\gamma} \circ \widetilde{\mathcal{F}}$ can be bounded in terms of the empirical Rademacher complexity of the original hypothesis set $\widetilde{\mathcal{F}}$; that is,
\begin{equation}\label{equation:Talagrand_lipschitz}
\widehat{\mathfrak{R}}_S(\ell_{\gamma} \circ \widetilde{\mathcal{F}}_{\gamma}) \leq \dfrac{1}{\gamma}\widehat{\mathfrak{R}}_S(\widetilde{\mathcal{F}}).
\end{equation}

Since the $y_i \in \{\pm 1\}$, by computing the empirical Rademacher complexity of $\widetilde{\mathcal{F}}$ over the set $S$, we also have $\widehat{\mathfrak{R}}_S(\widetilde{\mathcal{F}}) = \widehat{\mathfrak{R}}_S(\mathcal{F})$. Therefore, and by recalling the definition of $\widehat{\mathcal{R}}_{S, \gamma}(f)$, \eqref{equation:Rademacher_generalization} becomes
$$
\mathbb{E}_{\mu}[\ell_{\gamma}(-yf(x))] 
\leq \widehat{\mathcal{R}}_{S, \gamma}(f) + \dfrac{2}{\gamma}\widehat{\mathfrak{R}}_S(\mathcal{F}) + 3\sqrt{\dfrac{\log \frac{2}{\delta}}{2m}}.
$$

Focusing now on the left hand side of \eqref{equation:Rademacher_generalization}, note that by definition of the ramp loss, since $\mathbbm{1}_{-yf(x) \geq 0} \leq \ell_{\gamma}(-yf(x))$, we have
$$
\mathbb{E}[\mathbbm{1}_{-yf(x) \geq 0}] 
\leq 
\mathbb{E}[\ell_{\gamma}(-yf(x))]
$$
and $\mathbb{P}\left[yf(x) \leq 0\right] = \mathbb{E}[\mathbbm{1}_{-yf(x) \geq 0}]$. Therefore, 
$$
\mathbb{P}\left[yf(x) \leq 0\right]
\leq \widehat{\mathcal{R}}_{S, \gamma}(f) + \dfrac{2}{\gamma}\widehat{\mathfrak{R}}_S(\mathcal{F}) + 3\sqrt{\dfrac{\log \frac{2}{\delta}}{2m}}.
$$

Furthermore, by definition of the ramp loss, we have that 
\begin{eqnarray*}
\mathbb{P}\left[yf(x) \leq 0\right] 
&=& \mu(-yf(x) \geq 0) \\
&=& \mathbb{E}_{\mu}[\mathbbm{1}_{-yf(x) \geq 0}]\\
&\leq& \mathbb{E}_{\mu}[\ell_{\gamma}(-yf(x))].
\end{eqnarray*}

Therefore, 
\begin{equation}\label{equation:Rademacher_generalization_simplified}
\mathbb{P}\left[yf(x) \leq 0\right]  
\leq \widehat{\mathcal{R}}_{S, \gamma}(f) + \dfrac{2}{\gamma}\widehat{\mathfrak{R}}_S(\mathcal{F}) + 3\sqrt{\dfrac{\log \frac{2}{\delta}}{2m}}.
\end{equation}

To complete the proof we can use a form of the Dudley entropy integral to deduce an upper bound on $\widehat{\mathfrak{R}}_S(\mathcal{F})$. The Dudley entropy integral lemma (see Lemma A.5 of  \cite{bartlett2017spectrally}) states that 
$$
\widehat{\mathfrak{R}}_S(\mathcal{F}) \leq \inf_{\alpha > 0}\left(\dfrac{4\alpha}{\sqrt{m}} +  \dfrac{12}{m}\int_{\alpha}^{\sqrt{m}} \sqrt{\log \mathcal{N}\left(\left.\mathcal{F}\right|_S, \epsilon, \|\cdot\|_2 \right)} d\epsilon\right) .
$$
Examining the integral term above, note that $\left.\mathcal{F}\right|_S = \{f(X) ~|~f \in \mathcal{F}\} = \{f(x) ~|~f\in\mathcal{F}, x \in X\}$ where $X = S_X$ is the projection of the sample $S$ onto the inputs, so $\mathcal{N}(\mathcal{F}|_S, \epsilon, \|\cdot \|_2) = \mathcal{N}(\{f(X) ~|~f\in \mathcal{F}\}, \epsilon, \|\cdot \|_2)$ and, as in Lemma \ref{lemma:PI_covering_number} taking $\tilde{C} = \tilde{C}(a_1, a_2, \rho, \delta) := a_2 + \sqrt{a_1 \rho/\delta}$, it follows that   $\mathcal{N}(\{f(X) ~|~f\in \mathcal{F}\}, \epsilon, \|\cdot \|_2) = 1$ for all $\epsilon \geq \tilde{C}$ since  it requires only one ball of radius greater than or equal to $\tilde{C}$ to cover a ball of radius $\tilde{C}$. Thus,  the integrand above is zero for any $\epsilon \geq \tilde{C}$. We can further upper bound this integral by swapping the integral limit $\sqrt{m}$ with $\tilde{C}$ since the integral of a positive function is no greater than the integral of that function over a potentially larger domain. 
Therefore, we get, 
\begin{eqnarray*}
\widehat{\mathfrak{R}}_S(\mathcal{F})
&\leq& 
\inf_{\alpha > 0} \left\{ \dfrac{4\alpha}{\sqrt{m}} + \dfrac{12}{m} \int_{\alpha}^{\min(\sqrt{m}, \tilde{C})}  \sqrt{\log \mathcal{N}(\mathcal{F|_S}, \epsilon, \|\cdot \|_2)} d\epsilon \right\} \\
&\leq&
\inf_{\alpha > 0} \left\{ \dfrac{4\alpha}{\sqrt{m}} + \dfrac{12}{m} \int_{\alpha}^{\tilde{C}}  \sqrt{\log \mathcal{N}(\mathcal{F|_S}, \epsilon, \|\cdot \|_2)} d\epsilon \right\}
\end{eqnarray*}
To simplify this, let's first compute the integral term. By Lemma \ref{lemma:PI_covering_number},
\begin{eqnarray*}
\int_{\alpha}^{\tilde{C}} \sqrt{\log \mathcal{N}\left(\left.\mathcal{F}\right|_S, \epsilon, \|\cdot\|_2 \right)} d\epsilon 
&\leq& \int_{\alpha}^{\tilde{C}} \sqrt{\log \big(\tilde{C}/\epsilon\big)} d\epsilon \\
&=& \left.\epsilon \sqrt{\log \big(\tilde{C}/\epsilon\big)} \right|_{\epsilon = \alpha}^{\epsilon = \tilde{C}} - \left.\dfrac{\tilde{C}\sqrt{\pi}}{2}\erf\left(\sqrt{\log \big(\tilde{C}/\epsilon\big)}\right)\right|_{\epsilon = \alpha}^{\epsilon = \tilde{C}}
\end{eqnarray*}
where $\erf$ denotes the error function
$$
\erf(z) = \dfrac{2}{\sqrt{\pi}}\int_0^z e^{-t^2} dt.
$$
Evaluating the right hand side fully, we get
\begin{eqnarray*}
\int_{\alpha}^{\tilde{C}} \sqrt{\log \mathcal{N}\left(\left.\mathcal{F}\right|_S, \epsilon, \|\cdot\|_2 \right)} d\epsilon 
&\leq& \left.\epsilon \sqrt{\log \big(\tilde{C}/\epsilon\big)} \right|_{\epsilon = \alpha}^{\epsilon = \tilde{C}} - \left.\dfrac{\tilde{C}\sqrt{\pi}}{2}\erf\left(\sqrt{\log \big(\tilde{C}/\epsilon\big)}\right)\right|_{\epsilon = \alpha}^{\epsilon = \tilde{C}} \\
&=& -\alpha \sqrt{\log(\tilde{C}/\alpha)} + \dfrac{\tilde{C}\sqrt{\pi}}{2}\erf\left(\sqrt{\log \big(\tilde{C}/\alpha\big)}\right) \\
&\leq& \dfrac{\tilde{C}\sqrt{\pi}}{2} - \alpha \sqrt{\log(\tilde{C}/\alpha)},
\end{eqnarray*}
where in the last inequality we simply used the fact that for any $z >0$ we have $\erf(z) \leq 1$. Therefore, substituting this back into the entropy bound for $\widehat{\mathfrak{R}}_S(\mathcal{F})$ above, and bounding the $\inf$ by taking the limit  $\alpha$ goes to zero; we get,
\begin{eqnarray*}
\widehat{\mathfrak{R}}_S(\mathcal{F})
&\leq&
\inf_{\alpha > 0} \left\{ \dfrac{4\alpha}{\sqrt{m}} + \dfrac{12}{m}\left(\dfrac{\tilde{C}\sqrt{\pi}}{2} - \alpha \sqrt{\log(\tilde{C}/\alpha)} \right)\right\} \\
&\leq& \lim_{\alpha \to 0} \left\{ \dfrac{4\alpha}{\sqrt{m}} + \dfrac{12}{m}\left(\dfrac{\tilde{C}\sqrt{\pi}}{2} - \alpha \sqrt{\log(\tilde{C}/\alpha)} \right)\right\}\\
&=& \dfrac{6\tilde{C}\sqrt{\pi}}{m} 
\end{eqnarray*}

Note that in the inequalities above we are not finding the optimal or tightest upper bounds for $\widehat{\mathfrak{R}}_S(\mathcal{F})$ that are possible. However, given the nature of these expressions it is possible to determine bounds on how sharp these inequalities are. We simply note for the time being that, although these bounds are not sharp, they are not gross overestimates of the true infimum. Finally, substituting this bound on $\widehat{\mathfrak{R}}_S(\mathcal{F})$ into \eqref{equation:Rademacher_generalization_simplified} we get.
\begin{equation*}
\mathbb{P}\left[yf(x) \leq 0\right]  
\leq \widehat{\mathcal{R}}_{S, \gamma}(f) + \dfrac{12\tilde{C}\sqrt{\pi}}{\gamma m} + 3\sqrt{\dfrac{\log \frac{2}{\delta}}{2m}}
\end{equation*}
which completes the proof.

\end{proof}

\section{Proof of Theorem \ref{theorem:multiclass_generalization_bound}}\label{section:appendix_multiclass_proof}

One of the key components of our proof is the Poincar\'e inequality, originally stated for real-valued functions as in \cite{evans2022partial}. Under similar assumption the Poincar\'e inequality naturally extends to vector valued maps. We include the proof here.

Let's now detail the proof of the main covering lemma behind Theorem \ref{theorem:multiclass_generalization_bound}. As mentioned previously, the proof follows the same logic as the idea as case $k=1$ only here we need to be a bit more careful about multivariate norms. Note also, the final ball counting argument on the image in $\mathbb{R}^k$ incurs an additional cost resulting in an exponent $k$ which ultimately incurs a cost of a factor $\sqrt{k}$ in our final bound; c.f. \cite{zhang2004statistical}.

\begin{proof}[Proof of Lemma \ref{lemma:PI_covering_number_multiclass}]
Given $f\in\mathcal{F}$, let $f^i$ denote the component functions of $f$ for $i \in [k]$ and define $\tilde{f}:\mathbb{R}^d \to \mathbb{R}^k$ by
$$
\tilde{f} := (f^1 - \mathbb{E}_{\mu}[f^1], \dots, f^k - \mathbb{E}_{\mu}[f^k]).
$$
Thus, $\mathbb{E}_{\mu}[\tilde{f}] = 0 \in \mathbb{R}^k$ and $\GC(\tilde{f}, \mu) = \GC(f, \mu) \leq a_1$. Futhermore, by extending Chebyshev's inequality to this multivariate setting, we get that, for any $t \in \mathbb{R}_+$,
$$
\mathbb{P}\left[\|\tilde{f}\| \leq t \right] \geq 1 - \dfrac{\sum_i\Var_{\mu}(\tilde{f}^i)}{t^2}.
$$
Note that by the definition of $\tilde{f}$ and since $\mu$ satisfies $\PI(\rho)$, for each $i \in [k]$,
$$
\Var_{\mu}(\tilde{f}^i) = \int |\tilde{f}^i|^2 d\mu \leq \rho \int \|\nabla \tilde{f}^i\|^2 d\mu = \rho \GC(\tilde{f}^i, \mu).
$$
Furthermore, by the definition of $\GC(f, \mu)$ it follows that 
$$
\GC(\tilde{f}, \mu) = \int \|\nabla_x \tilde{f}\|^2_F d\mu = \int \sum_{i,j} \left| \dfrac{\partial \tilde{f}^i}{\partial x_j}\right|^2 d\mu = \sum_i \GC(\tilde{f}^i, \mu). 
$$
Using this simplification of $\GC(\tilde{f}, \mu)$ and substituting for $\Var_{\mu}(\tilde{f}^i)$ in the application of Chebyshev's inequality above, we get
$$
\mathbb{P}\left[\|\tilde{f}\| \leq t \right] \geq 1 - \dfrac{\rho \GC(\tilde{f}, \mu)}{t^2} \geq 1 - \dfrac{a_1\rho}{t^2}.
$$
As before, taking $\delta = a_1\rho/t^2$ and solving for $t$, we get $t = \sqrt{a_1\rho/\delta}$; thus, for any $\delta \in (0,1)$, it follows that 
$$
\mathbb{P} \left[\|\tilde{f}\| \leq \sqrt{a_1\rho/\delta}\right] \geq 1 - \delta.
$$
Therefore, since $\|\mathbb{E}_{\mu}[f] \| \leq a_2$, for any $f\in\mathcal{F}$ with high probability we can bound the image of $f$ within a ball in $\mathbb{R}^k$; namely,
$$
\mathbb{P}\left[\|f\| \leq a_2 + \sqrt{a_1\rho/\delta} \right] \geq 1- \delta.
$$

The rest of the argument follows from a standard ball counting argument in $\mathbb{R}^k$.
Given $\epsilon >0$, let $r := a_2 + \sqrt{a_1\rho/\delta}$ and take a maximal set of points $p_i \in B(r)$ such that $\dist(p_i, p_j) > \epsilon$ for $i\neq j$. It follows that $B_{p_i}(\epsilon/2) \cap B_{p_j}(\epsilon/2) = \emptyset$ and $\cup_i  B_{p_i}(\epsilon/2) \subset B(r(1 + \epsilon/2))$. Thus, by construction and taking volumes on both sides, $\sum_i |B_{p_i}(\epsilon/2)| \leq |B(r(1 + \epsilon/2))|$. Let $N$ denote the number of points $p_i$ and since $|B_{p_i}(\epsilon/2)| = |B(\epsilon/2)|$ for all $i \in [N]$, we get 
$$
N \leq \dfrac{|B(r(1 + \epsilon/2))|}{|B(\epsilon/2)|} = r^k (1 + 2/\epsilon)^k.
$$
Therefore, for small $\epsilon < 1$, $N \leq r^k(3/\epsilon)^k$ and thus
$$
\mathcal{N}(\{f(X) ~|~ f \in \mathcal{F}\}, \epsilon, \|\cdot\|_2) \leq \dfrac{3^k}{\epsilon^k}\left(a_2 + \sqrt{\dfrac{a_1 \rho}{\delta}}\right)^k.
$$
This completes the proof.
\end{proof}

Using this covering lemma we can now prove our main Theorem \ref{theorem:multiclass_generalization_bound}. In fact, the argument follows the same lines as the case $k=1$ with only slight modification to account for margin operator in the multi-class setting and the application of the Dudley entropy formula when bounding the empiricial Rademacher complexity. 

Let us restate the theorem and provide the proof:

\begin{theorem}
Given $a_1, a_2$ be positive reals. Let $S = \{(x_1,  y_1), \dots, (x_m, y_m)\}$ be i.i.d.~input-output pairs in $\mathbb{R}^d \times \{1,\cdots, k\}$ and suppose the distribution $\mu$ of the $x_i$ satisfies the Poincar\'e inequality with constant $\rho >0$.  Then, for any $\delta > 0$, with probability at least $1 - \delta$, every margin $\gamma > 0$ and network $f: \mathbb{R}^d \to \mathbb{R}^k$ which satisfies $\GC(f, \mu) \leq a_1$ and $\|\mathbb{E}_{\mu}(f)\| \leq a_2$ satisfy
$$
\mathbb{P}\left[\argmin_jf(x)_j \neq y \right] \leq \widehat{\mathcal{R}}_{S, \gamma}(f) + 
\dfrac{36\tilde{C}\sqrt{k\pi}}{\gamma m} + 
 3\sqrt{\dfrac{\log \frac{2}{\delta}}{2m}}
$$
where $\widehat{\mathcal{R}}_{S, \gamma}(f) =m^{-1} \sum_i \mathbbm{1}_{y_if(x_i) \leq \gamma}$ and $\tilde{C} = a_2 + \sqrt{a_1\rho/\delta}$.
\end{theorem}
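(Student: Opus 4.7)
The plan is to follow the same three-step strategy used for Theorem~\ref{theorem:binary_generalization_bound}, adapted to the multi-class setting: first control the covering numbers of $\mathcal{F}$ via Lemma~\ref{lemma:PI_covering_number_multiclass}, then feed this bound into the Dudley entropy integral to bound the empirical Rademacher complexity of $\mathcal{F}$, and finally invoke a margin-based Rademacher generalization bound to convert the complexity bound into the stated excess-risk guarantee.

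Concretely, let $\mathcal{F}$ denote the class in the theorem statement and set $\widetilde{\mathcal{F}}_\gamma := \{(x,y) \mapsto \ell_\gamma(-\mathcal{M}(f(x), y)) : f \in \mathcal{F}\}$, where $\mathcal{M}$ is the multi-class margin operator. Since $\mathbbm{1}[\mathcal{M}(f(x), y) \leq 0] \leq \ell_\gamma(-\mathcal{M}(f(x), y))$ and the ramp loss is bounded in $[0,1]$, applying the standard Rademacher generalization bound (Theorem~3.3 of \cite{mohri2018foundations}) yields, with probability at least $1-\delta$,
$$
\mathbb{P}[\mathcal{M}(f(x),y) \leq 0] \leq \widehat{\mathcal{R}}_{S,\gamma}(f) + 2\,\widehat{\mathfrak{R}}_S(\widetilde{\mathcal{F}}_\gamma) + 3\sqrt{\log(2/\delta)/(2m)}.
$$
I would then reduce $\widehat{\mathfrak{R}}_S(\widetilde{\mathcal{F}}_\gamma)$ to the Rademacher complexity of the vector-valued class $\mathcal{F} \subset L^2(\mu_{S_X};\mathbb{R}^k)$ via contraction: the ramp loss is $1/\gamma$-Lipschitz, $\mathcal{M}(\cdot, y)$ is Lipschitz in its first argument with a small absolute constant in the $\ell^2$ sense, and composing these via either Talagrand's lemma applied coordinate-wise or Maurer's vector contraction inequality gives a bound of the form $\widehat{\mathfrak{R}}_S(\widetilde{\mathcal{F}}_\gamma) \leq (C/\gamma)\,\widehat{\mathfrak{R}}_S(\mathcal{F})$.

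For the last step, I would invoke the Dudley entropy integral
$$
\widehat{\mathfrak{R}}_S(\mathcal{F}) \leq \inf_{\alpha>0}\Bigl\{\frac{4\alpha}{\sqrt{m}} + \frac{12}{m}\int_{\alpha}^{3\tilde{C}} \sqrt{\log \mathcal{N}(\mathcal{F}|_S, \epsilon, \|\cdot\|_2)}\,d\epsilon\Bigr\},
$$
noting that Lemma~\ref{lemma:PI_covering_number_multiclass} already forces $\mathcal{N}\leq 1$ once $\epsilon\geq 3\tilde{C}$. Substituting $\log \mathcal{N} \leq k\log(3\tilde{C}/\epsilon)$ pulls $\sqrt{k}$ out of the square root, and the change of variable $s = \epsilon/(3\tilde{C})$ reduces the remaining integral to $3\tilde{C}\int_0^1 \sqrt{\log(1/s)}\,ds = 3\tilde{C}\,\Gamma(3/2) = 3\tilde{C}\sqrt{\pi}/2$. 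Sending $\alpha\to 0$ gives $\widehat{\mathfrak{R}}_S(\mathcal{F}) \leq 18\tilde{C}\sqrt{k\pi}/m$, and combining with the $2/\gamma$ factor from contraction delivers the claimed $36\tilde{C}\sqrt{k\pi}/(\gamma m)$ term. Intuitively, the $\sqrt{k}$ comes directly from the exponent $k$ in the volume-based covering bound, while the extra factor of $3$ relative to the binary constant $12$ is the cost of the enlarged integration cutoff $3\tilde{C}$ produced by that same ball-counting step.

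The main obstacle is the multi-class contraction. In the binary case, the observation $y_i \in \{\pm 1\}$ collapses $\widehat{\mathfrak{R}}_S(\widetilde{\mathcal{F}}) = \widehat{\mathfrak{R}}_S(\mathcal{F})$ for free, but here the margin operator $\mathcal{M}$ couples all $k$ output coordinates through a maximum, so relating $\widehat{\mathfrak{R}}_S(\widetilde{\mathcal{F}}_\gamma)$ to the Rademacher complexity of the vector-valued class $\mathcal{F}$ with the correct absolute constants and while preserving a $\sqrt{k}$ (rather than $k$) dependence is delicate. Getting the final constant $36$ exactly right hinges on executing this contraction step cleanly, and in particular on avoiding a naive union bound over the $k$ coordinates.
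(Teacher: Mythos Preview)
Your proposal is correct and follows essentially the same route as the paper's proof: the paper also defines $\widetilde{\mathcal{F}}_\gamma$ via the margin operator, applies the standard Rademacher bound, reduces $\widehat{\mathfrak{R}}_S(\widetilde{\mathcal{F}}_\gamma)$ to $(1/\gamma)\,\widehat{\mathfrak{R}}_S(\mathcal{F})$ by a single-line appeal to Talagrand's lemma (together with Lemma~A.4 of \cite{bartlett2017spectrally} for the lower bound on the left side), and then feeds the covering estimate of Lemma~\ref{lemma:PI_covering_number_multiclass} into the Dudley integral with cutoff $3\tilde{C}$ to obtain $\widehat{\mathfrak{R}}_S(\mathcal{F}) \leq 18\tilde{C}\sqrt{k\pi}/m$. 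The only cosmetic differences are that the paper evaluates the entropy integral via the error function (as in the binary case) rather than your $\Gamma(3/2)$ substitution, and that the paper does \emph{not} elaborate on the multi-class contraction step you flag as delicate---it simply invokes Talagrand and moves on, so your concern about extracting the exact constant is well-founded but is not something the paper itself resolves in more detail.
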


\begin{proof}[Proof of Theorem \ref{theorem:multiclass_generalization_bound}]
Let $\mathcal{F}$ denote the class of differentiable maps
$$
\mathcal{F} := \{f: \mathbb{R}^d \to \mathbb{R}^k ~|~ \GC(f, \mu) \leq a_1, \|\mathbb{E}_{\mu}[f]\| \leq a_2\}
$$
and for any $\gamma >0$ define 
$$
\widetilde{\mathcal{F}}_{\gamma} := \{(x,y) \mapsto \ell_{\gamma}(-\mathcal{M}(f(x), y) ~|~ f \in \mathcal{F}\}
$$
where $\mathcal{M}(\cdot, \cdot)$ denotes the margin operator  $\mathcal{M}: \mathbb{R}^k \times \{1, \dots, k\} \to \mathbb{R}$  defined by $\mathcal{M}(v, y) = v_y - \max_{i \neq y} v_i$ and $\ell_{\gamma}:\mathbb{R} \to \mathbb{R}^+$ denotes the usual ramp loss.

Similar to the proof of Theorem \ref{theorem:binary_generalization_bound},  since $\ell_{\gamma}$ has range $[0,1]$ and it follows from classic generalization bounds based on the Rademacher complexity (e.g., Theorem 3.3 in \cite{mohri2018foundations}) that, for any $\delta > 0$, with probability at least $1 - \delta$ over the draw of an i.i.d.~sample $S$ of size $m$, we have for all $f \in \widetilde{\mathcal{F}}_{\gamma}$:
\begin{equation}\label{equation:Rademacher_generalization_multiclass}
\mathbb{E}[\ell_{\gamma}(-\mathcal{M}(f(x), y))] 
\leq 
\widehat{\mathcal{R}}_{S, \gamma}(f) + 
  2 \widehat{\mathfrak{R}}_S(\widetilde{\mathcal{F}}_{\gamma}) +   3\sqrt{\dfrac{\log \frac{2}{\delta}}{2m}}
\end{equation}
where now $\widehat{\mathcal{R}}_{S, \gamma}(f)  = m^{-1}\sum_i\ell_{\gamma}(-\mathcal{M}(f(x_i), y_i))$.

We can lower bound the left hand side of \eqref{equation:Rademacher_generalization_multiclass} (see Lemma A.4 of \cite{bartlett2017spectrally}) so that $\mathbb{P}\left[\argmax_i f(x)_i \neq y \right] \leq \mathbb{E}[\ell_{\gamma}(-\mathcal{M}(f(x), y))] $ and, via Talagrand's lemma, we can also upper bound $\widehat{\mathfrak{R}}_S(\widetilde{\mathcal{F}}_{\gamma})$ on the right hand side to get
$$
\mathbb{P}\left[\argmax_i f(x)_i \neq y \right] \leq \widehat{\mathcal{R}}_{S, \gamma}(f) + 
  \dfrac{2}{\gamma}\widehat{\mathfrak{R}}_S(\mathcal{F}) +   3\sqrt{\dfrac{\log \frac{2}{\delta}}{2m}}
$$

It remains to bound $\widehat{\mathfrak{R}}_S(\mathcal{F})$ which we can again accomplish through the Dudley entropy integral, as in the proof of Theorem \ref{theorem:binary_generalization_bound}, with only a very slight modification when using the covering number bound afforded by Lemma \ref{lemma:PI_covering_number_multiclass}. Namely, 
taking as before $\tilde{C} = \tilde{C}(a_1, a_2, \rho, \delta) := a_2 + \sqrt{a_1 \rho/\delta}$, then $\mathcal{N}\left(\left.\mathcal{F}\right|_S, \epsilon, \|\cdot\|_2 \right) \leq  (3\tilde{C}/\epsilon)^k$. Following the same argument to evaluate the integral we can obtain a comparable bound on the empirical Rademacher complexity of $\mathcal{F}$ over $S$, but now paying a cost of $\sqrt{k}$; i.e.,
$$
\widehat{\mathfrak{R}}_S(\mathcal{F})
\leq
\inf_{\alpha > 0} \left\{ \dfrac{4\alpha}{\sqrt{m}} + \dfrac{12\sqrt{k}}{m}\left(  \dfrac{3\tilde{C}\sqrt{\pi}}{2} -\alpha \sqrt{\log(3\tilde{C}/\alpha)} \right)\right\}
\leq \dfrac{18\tilde{C}\sqrt{k}\sqrt{\pi}}{m}.
$$
Thus, collecting terms we get
$$
\mathbb{P}\left[\argmin_jf(x)_j \neq y \right] \leq \widehat{\mathcal{R}}_{S, \gamma}(f) + 
\dfrac{36\tilde{C}\sqrt{k\pi}}{\gamma m} +
3\sqrt{\dfrac{\log \frac{2}{\delta}}{2m}}.
$$

\end{proof}

\section{Experiment details}\label{section:appendix_cifar_experiments}
We trained a ResNet-18 \cite{he2016deep} with SGD on CIFAR-10 and CIFAR-100 with both original and random labels. During training we trained with batch size 256 for 100,000 steps with learning rate 0.05. Here we plot the curves
for the excess risk (test accuracy - train accuracy) and compare with the geometric complexity during training. 

\begin{figure}[h]
\includegraphics[width=8.4cm]{./images/cifar10-resnet18-gc-generalization.png}
\includegraphics[width=8.4cm]{./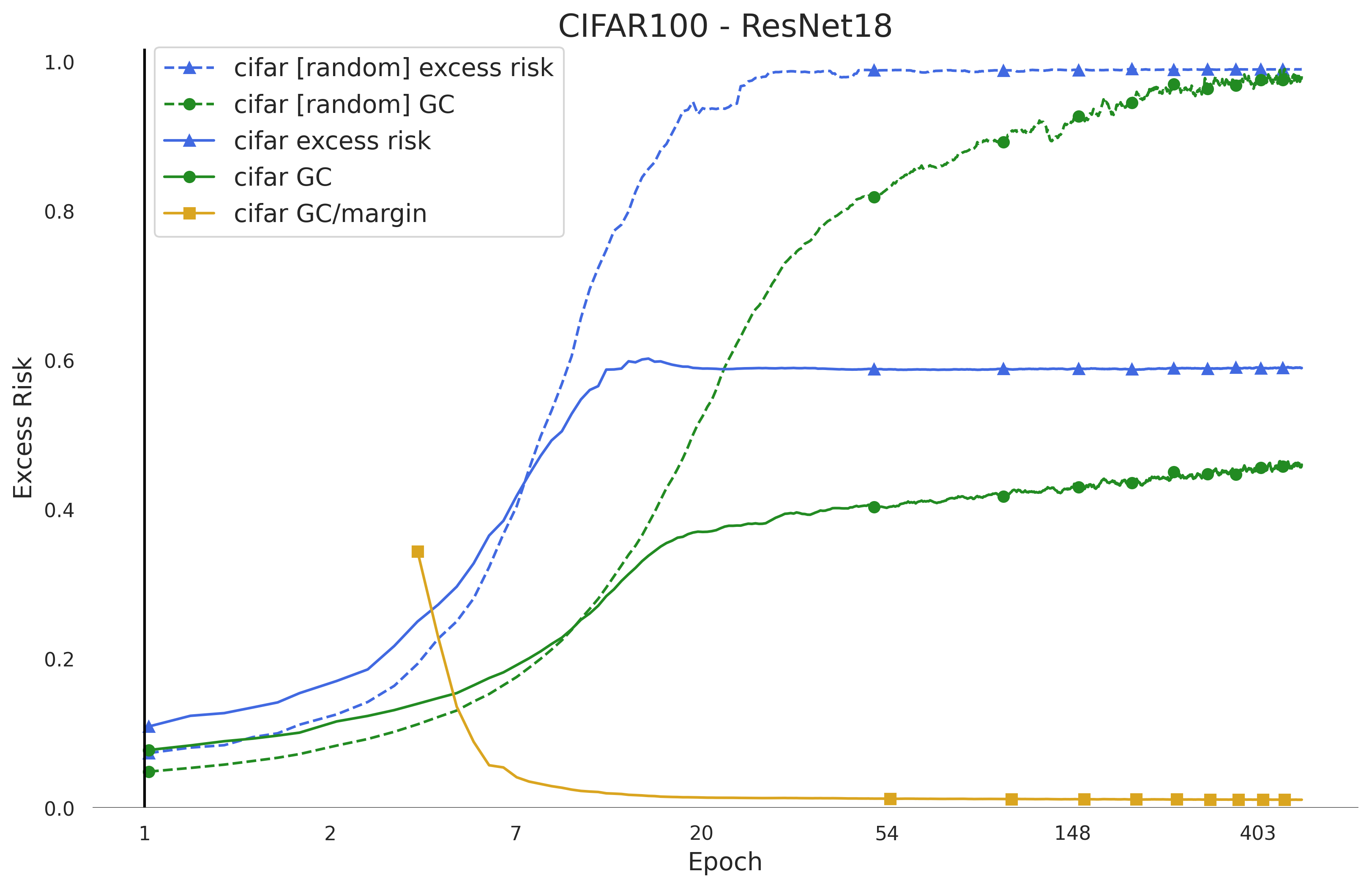}
\caption{Analysis of ResNet-18 \cite{he2016deep} trained with SGD on CIFAR-10 (left) and CIFAR-100 (right) with both original and with random labels. The triangle-marked curves plot the excess risk across training epochs (on
a log scale). Circle-marked curves track the geometric complexity ($\GC$). Note that the $\GC$ is tightly correlated with the excess risk in both settings. Normalizing the $\GC$ by the margin neutralizes growth across epochs.}
\label{figure:cifar10_appendix}
\end{figure}

\end{document}